\documentclass[11pt]{article}

\usepackage[utf8]{inputenc}
\usepackage[T1]{fontenc}
\usepackage{lmodern}
\usepackage[margin=1in]{geometry}
\usepackage{amsmath,amssymb,amsthm,mathtools}
\usepackage{bm}
\usepackage{graphicx}
\usepackage{booktabs}
\usepackage{enumitem}
\usepackage{microtype}
\usepackage[round,authoryear]{natbib}
\usepackage[colorlinks=true,linkcolor=blue!50!black,citecolor=blue!50!black,urlcolor=blue!50!black]{hyperref}
\usepackage[capitalise]{cleveref}
\usepackage{xcolor}
\crefname{problem}{Open Problem}{Open Problems}
\Crefname{problem}{Open Problem}{Open Problems}
\crefname{conjecture}{Conjecture}{Conjectures}
\Crefname{conjecture}{Conjecture}{Conjectures}

\theoremstyle{plain}
\newtheorem{theorem}{Theorem}[section]

\newtheorem{proposition}[theorem]{Proposition}
\newtheorem{corollary}[theorem]{Corollary}
\theoremstyle{definition}
\newtheorem{definition}[theorem]{Definition}
\newtheorem{example}[theorem]{Example}
\newtheorem{remark}[theorem]{Remark}
\newtheorem{conjecture}[theorem]{Conjecture}
\newtheorem{problem}[theorem]{Open Problem}

\newcommand{\R}{\mathbb{R}}
\newcommand{\N}{\mathbb{N}}
\newcommand{\E}{\mathbb{E}}
\newcommand{\Prob}{\mathbb{P}}
\newcommand{\W}{\mathcal{W}}
\newcommand{\Wo}{\widetilde{\W}_0}
\newcommand{\cutnorm}[1]{\left\Vert #1 \right\Vert_{\square}}
\newcommand{\cutdist}{\delta_{\square}}

\newcommand{\sphere}{\mathbb{S}}
\newcommand{\ball}{\mathbb{B}}
\newcommand{\GW}{\mathrm{GW}}
\DeclareMathOperator{\Vol}{Vol}
\DeclareMathOperator{\Area}{Area}

\title{\bfseries A Topological Characterization of Graph Neural Networks\\
       via Stochastic Block Model Embeddings on the $n$-Sphere}
\author{Gopal Anantharaman\\
        \small KnotTheory.ai Inc., along with
        Dept.\ of Mathematics, Emporia State University \\
        \texttt{gopal@knottheory.ai}}
\date{\today}

\begin{document}
\maketitle

\begin{abstract}
We propose a topological framework for comparing trained Graph Neural Networks
(GNNs) by mapping the Stochastic Block Models (SBMs) induced on the
graphon-signal space of a Message Passing Neural Network (MPNN) onto the unit
$n$-sphere $\sphere^{n-1}\subset\R^n$.  The construction rests on three
classical pillars:  the \emph{compactness} of the cut-distance graphon space
$(\Wo,\cutdist)$ \citep{lovasz2006limits,lovasz2012large},  the
Frieze--Kannan \emph{weak regularity lemma} together with its graphon-signal
extension due to \citet{levie2023graphon}, and the Lipschitz continuity of
MPNNs with respect to the cut-distance.  We show that, for any prescribed
tolerance $\varepsilon>0$, a trained MPNN $\Phi$ acting on a sufficiently
large graph factors (up to $\varepsilon$) through a step-graphon-signal of
bounded complexity, and we construct an explicit measure-preserving map
$\Psi_n\colon[0,1]\to\sphere^{n-1}$ that places the SBM regions on disjoint
spherical caps.  This produces a
problem-agnostic, low-dimensional ``fingerprint'' of a trained GNN that is
amenable to visual inspection and to nearest-neighbour search across model
zoos, enabling \emph{transfer-learning candidate retrieval} without
retraining.  We discuss the obstruction posed by concentration of measure
in high dimension --- a phenomenon directly relevant to LLM-scale
embeddings.
We close with five concrete future research directions: hyperbolic and
Grassmannian alternatives to the spherical model, Gromov--Wasserstein
distances on graphon-signals as an isometry-free alternative to the
$n$-sphere map, an information-geometric (Fisher) reformulation of the SBM
manifold, persistent-homology fingerprints of layer-wise embedding clouds,
and a spectral-distance baseline derived from the graphon eigendecomposition.
\end{abstract}

\tableofcontents

\section{Introduction}\label{sec:intro}

\paragraph{The problem in one paragraph.}
Imagine you have a library of trained Graph Neural Networks, one per problem
domain --- molecular toxicity here, citation-graph classification there,
traffic forecasting somewhere else.  A new graph problem arrives.  Which
existing model should you start from?  Today the answer is essentially
``ask a human expert,'' because two models with completely different
weight tensors can be doing nearly identical work underneath, and two
models that look superficially similar can be doing different things.  We
need a comparable, low-dimensional summary of \emph{what a trained GNN has
actually learned to do} --- a fingerprint we can search.  This paper
proposes one.

\paragraph{What a GNN really learns.}
A trained Graph Neural Network is, in operational terms, a sequence of
non-linear update functions
$\Phi_1,\dots,\Phi_K$ that map an initial node/edge/graph signal
$x\in(\R^d)^V$ to a learned embedding $h\in(\R^{d'})^V$.  The embedding is
problem-specific: its dimensions, scale, and even semantics are tied to a
particular dataset, loss function, and downstream task.  Two GNNs trained on
two ostensibly different problems may nevertheless converge on internal
representations that encode \emph{topologically equivalent} structure --- a
fact that is invisible if one compares only the raw weight tensors.

\begin{figure}[t]
  \centering
  \includegraphics[width=0.55\linewidth]{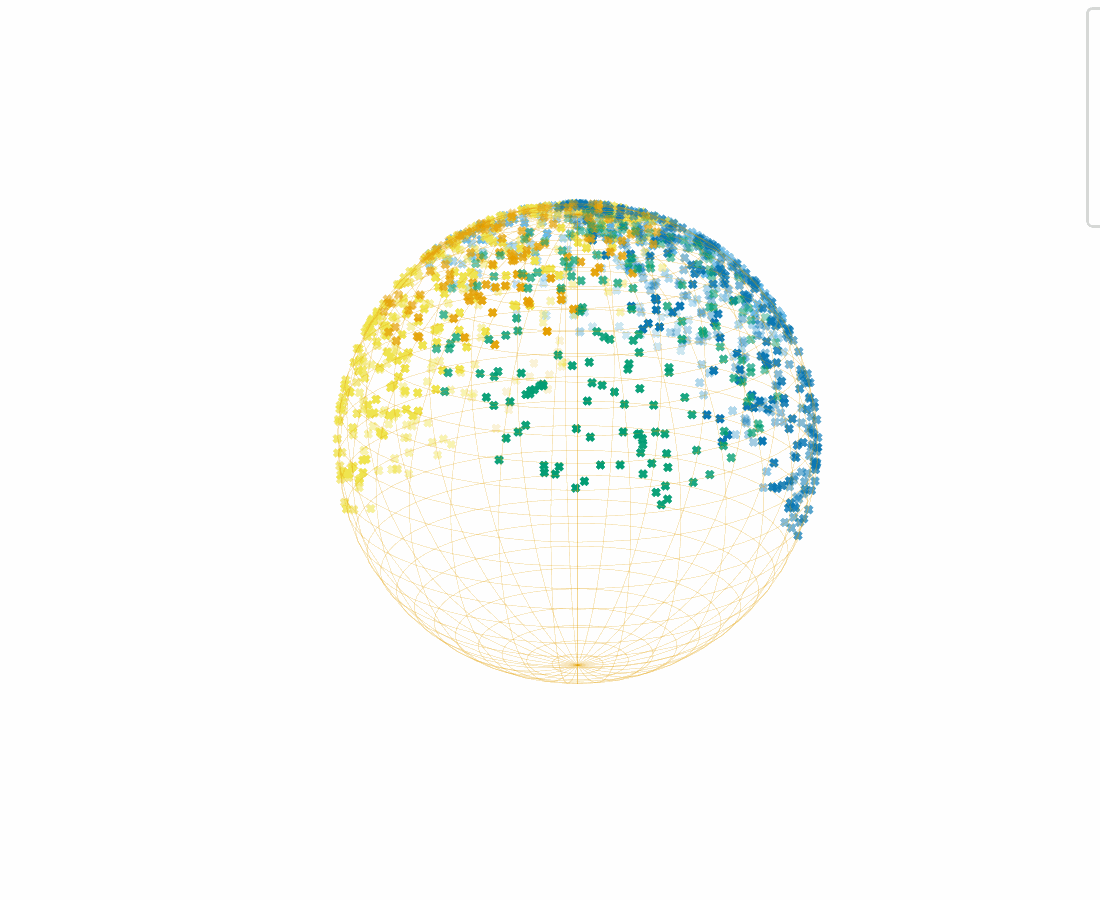}
  \caption{Spherical fingerprint of a $5$-class SBM: each block is mapped
    to a distinct zonal band of the unit sphere, and individual points
    represent the empirical distribution of nodes inside each block.
    Disjoint colours correspond to disjoint blocks.  Comparing two such
    pictures (or their pushforward measures, via $W_1$) is the
    operational form of the cross-model retrieval we propose.}
  \label{fig:sbm-sphere}
\end{figure}

The key insight is that the right object for cross-problem comparison is not
the network itself, nor its embeddings, but the
\emph{step-function approximation} of the underlying graphon-signal that the
network has effectively learned.  A graphon is, intuitively, the
continuum limit of an adjacency matrix as the number of nodes grows ---
a $[0,1]\times[0,1]$ ``pixel'' picture where intensity is edge probability.
A step-graphon is a graphon constant on rectangular blocks; these step
functions are exactly the Stochastic Block Models (SBMs) familiar from
network science \citep{holland1983stochastic}.  Step-graphons form a dense
subset of the compact graphon space $(\Wo,\cutdist)$
\citep{lovasz2006limits,lovasz2012large}, which is why we can hope to
build a finite library of canonical fingerprints that covers everything to
a prescribed tolerance.

Our contribution is to push this comparison from an abstract metric statement
into a concrete, visualisable representation:
\begin{enumerate}[leftmargin=2em]
\item A canonical mapping $\Psi_n\colon[0,1]\to\sphere^{n-1}$ that places
      the $n$ blocks of a step-graphon onto disjoint spherical caps of equal
      Hausdorff measure.
\item Numerical determination of the unique sphere dimension at which the
      total surface area equals $1$, suitable for use as a normalised
      probability surface.
\item A discussion of when this $n$-sphere model breaks down --- in
      particular, the concentration-of-measure phenomenon that affects
      $n\gtrsim 30$.
\item A roster of meaningful alternatives drawn from differential geometry,
      optimal transport, and topological data analysis.
\end{enumerate}

The resulting object is a low-dimensional, hue-coded picture of an SBM that
the developer of a new GNN can compare visually (and algorithmically, via
$L^2$ on the sphere or via Wasserstein) against a library of fingerprints of
already-trained GNNs.  When a close match is found, the new task may be
amenable to a re-mapping of an existing embedding rather than de novo
training.

\paragraph{Roadmap.} \Cref{sec:gnn-basics} reviews the necessary
background.  \Cref{sec:graphon,sec:regularity,sec:graphon-signal} present
graphons, the weak regularity lemma, and Levie's graphon-signal extension.
\Cref{sec:sbm-topo} develops the SBM-to-sphere construction.
\Cref{sec:contributions} is original: it diagnoses the concentration-of-measure
obstruction and gives a careful proof of the measure-preservation property.
\Cref{sec:applications} discusses transfer-learning workflows.
\Cref{sec:future} outlines five future-research directions.

\section{Graph Neural Networks and Message Passing}\label{sec:gnn-basics}

We assume the reader is familiar with empirical-risk minimisation,
train/validation/test splits, and the universal approximation property
of multilayer perceptrons \citep{cybenko1989approximation,hornik1991approximation},
which justifies using MLPs as elementary update modules inside a GNN.

\begin{figure}[t]
  \centering
  \includegraphics[width=0.95\linewidth]{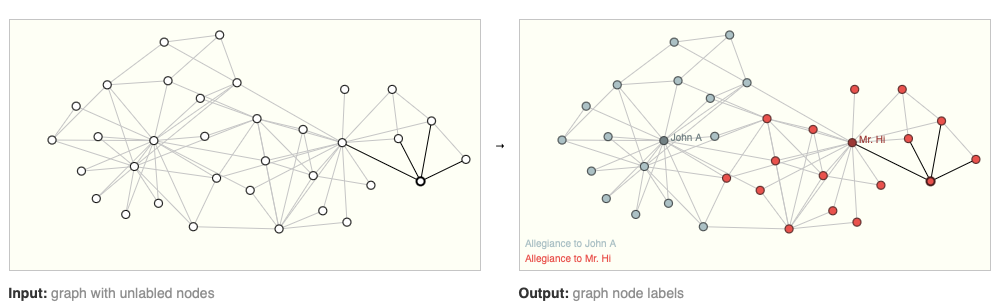}
  \caption{The canonical GNN task: given a graph of unlabelled nodes (left,
    Zachary's karate club), produce a labelling (right) that respects the
    graph structure.  A trained GNN performs this map by repeated rounds
    of message passing between neighbouring nodes.  Figure reproduced from
    \citet{sanchez2021gentle}.}
  \label{fig:karate}
\end{figure}

We refer the reader to \citet{sanchez2021gentle} for an extensive visual
walkthrough of the GNN model class; the formulation below follows the
message-passing abstraction of \citet{gilmer2017neural}.

A graph $G=(V,E)$ with node features $x_v\in\R^{d_0}$ ($v\in V$) and edge
features $e_{uv}\in\R^{d_e}$ ($\{u,v\}\in E$) is processed by a
\emph{Message Passing Neural Network (MPNN)} \citep{gilmer2017neural} as
follows.  Each node holds a vector ``state'' $h_v^{(k)}$ at round $k$.  In
each round, every node collects messages from its neighbours, aggregates
them in a permutation-invariant way (sum, mean, max, or attention), and
updates its state.  After $K$ rounds, the state at each node summarises
information drawn from the $K$-hop neighbourhood around it
(\cref{fig:msgpass}).  Formally, node embeddings
$h_v^{(k)}\in\R^{d_k}$ evolve according to
\begin{align}
  m_v^{(k+1)} &= \mathrm{Aggregate}^{(k)}\!
                  \Bigl(\bigl\{\!\!\bigl\{\,
                       \mathrm{Msg}^{(k)}\!\bigl(h_u^{(k)},h_v^{(k)},e_{uv}\bigr)
                  \,\bigr\}\!\!\bigr\}_{u\in N(v)}\Bigr),
   \label{eq:mpnn-aggregate}\\
  h_v^{(k+1)} &= \mathrm{Update}^{(k)}\!\bigl(h_v^{(k)},\,m_v^{(k+1)}\bigr) ,
   \label{eq:mpnn-update}
\end{align}
where $N(v)=\{u:\{u,v\}\in E\}$, $\{\!\!\{\cdot\}\!\!\}$ denotes a multiset,
and $\mathrm{Msg}^{(k)},\mathrm{Update}^{(k)}$ are MLPs.  The
\textsc{Aggregate} operation must be a permutation-invariant
multiset-to-vector reduction (sum, mean, max,\dots) so that the network
respects the symmetry group $\mathrm{Sym}(V)$ acting on node labellings.

\begin{figure}[t]
  \centering
  \includegraphics[width=0.7\linewidth]{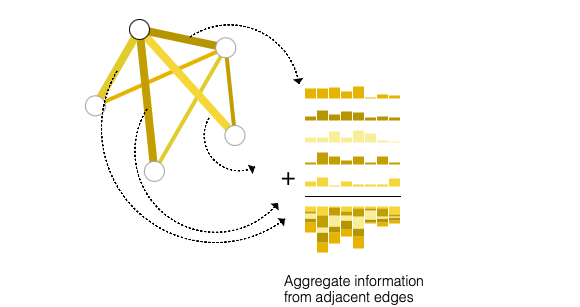}
  \caption{One round of message passing at a node.  The node collects the
    states of its neighbours along incident edges, aggregates them in a
    permutation-invariant way, and updates its own state.  Figure
    reproduced from \citet{sanchez2021gentle}.}
  \label{fig:msgpass}
\end{figure}

Special cases of \cref{eq:mpnn-aggregate,eq:mpnn-update} include Graph
Convolutional Networks \citep{kipf2017semi}, Graph Attention Networks
\citep{velickovic2018graph}, and the Graph Isomorphism Network
\citep{xu2019powerful}.  The expressive power of MPNNs is bounded above by
the Weisfeiler--Lehman colour-refinement hierarchy
\citep{xu2019powerful,maron2019invariant}, a fact we will revisit when we
discuss limitations of the spherical fingerprint in
\cref{sec:contributions}.

\paragraph{From signals on graphs to signals on graphons.}
For a fixed maximum number of nodes, the node-feature space
$(\R^{d})^{V}$ is finite-dimensional, but to discuss \emph{generalisation
across graph sizes} one passes to a continuum limit.  This is precisely the
purpose of graphons.

\section{Graphons as Limits of Dense Graphs}\label{sec:graphon}

\paragraph{Intuition.}
Picture an $n\times n$ adjacency matrix laid out as a grid of black/white
pixels, with the unit square $[0,1]^2$ as its canvas: white where there
is no edge, black where there is one.  As $n$ grows, the pixels shrink
and the picture is well approximated by a continuous greyscale function
$W\colon[0,1]^2\to[0,1]$, with the grey level representing edge
probability.  That continuous limit is the \emph{graphon} of the graph
sequence.  Graphons let us talk about ``the same graph at different
scales'' --- the underlying probabilistic blueprint --- which is exactly
what we want when comparing two GNNs that may be trained on graphs of
very different sizes.

\begin{definition}[Graphon]\label{def:graphon}
A \emph{graphon} is a symmetric, Lebesgue-measurable function
$W\colon[0,1]^2\to[0,1]$.  We write $\W$ for the set of all graphons and
$\W_0$ for the closure under almost-everywhere equality.
\end{definition}

\begin{example}\label{ex:induced-graphon}
A simple graph $G$ on $n$ labelled vertices induces a graphon $W_G$ as
follows.  Partition $[0,1]$ into $n$ equal subintervals
$I_1=[0,\tfrac{1}{n}),\,I_2=[\tfrac{1}{n},\tfrac{2}{n}),\,\dots,\,I_n$ and set
$W_G(x,y) = A_{ij}$ for $x\in I_i,\,y\in I_j$, where $A$ is the adjacency
matrix of $G$.  In other words: $W_G$ is just the adjacency matrix of $G$
re-drawn on the unit square at resolution $1/n$.  $W_G$ is piecewise
constant on the $n\times n$ grid of dyadic squares; the SBM picture
described in the Introduction is the same object with the cells coarsened
from $n$ to a smaller block count.
\end{example}

We now need a metric on graphons.  The pointwise $L^2$ distance is the
obvious choice but is too sensitive to noise in the pixel picture: shuffling
the rows and columns of an adjacency matrix produces the same graph, but
$L^2$-distance can rise to its maximum.  We need a metric that ignores such
relabellings, and that compares two pictures by how different they look at
\emph{any} block-vs-block scale.  The \emph{cut-norm} captures this: it
takes the supremum, over all pairs of subsets $S,T\subset[0,1]$, of the
discrepancy between the two graphons on the rectangle $S\times T$.

\begin{definition}[Cut-norm and cut-distance]\label{def:cut}
For $W\in\W$ define the \emph{cut-norm}
\[
  \cutnorm{W}
  \;:=\;
  \sup_{S,T\subset[0,1]}
       \Bigl|\!\int_{S\times T}\! W(x,y)\,dx\,dy\Bigr|.
\]
For two graphons $U,W$ the \emph{cut-metric} (or invariant cut-distance) is
\[
  \cutdist(U,W)
  \;:=\;
  \inf_{\varphi\in\mathcal{S}_{[0,1]}}
       \cutnorm{U-W^{\varphi}},
  \qquad\text{where }
  W^{\varphi}(x,y)=W(\varphi(x),\varphi(y))
\]
and $\mathcal{S}_{[0,1]}$ is the group of measure-preserving bijections of
$[0,1]$ modulo null-sets.
\end{definition}

The infimum over $\varphi$ encodes the ``relabel before comparing'' step:
$\varphi$ permutes the rows/columns of the graphon picture, and we keep
the best alignment.  Two graphons at cut-distance zero are the same graph
on the unit square up to a measure-preserving rearrangement of the
parameter axis.

\begin{theorem}[Compactness of $(\Wo,\cutdist)$, Lov\'asz--Szegedy]\label{thm:compact}
The quotient space $\Wo := \W_0/\!\!\sim_\square$ obtained by identifying
graphons at cut-distance zero is a compact metric space under $\cutdist$.
\end{theorem}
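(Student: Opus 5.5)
The plan is to follow the Lov\'asz--Szegedy strategy: weak regularity gives finite-dimensional approximations, and a martingale argument turns these into a limit. It suffices to show that every sequence $(W_n)\subset\W_0$ has a subsequence converging in $\cutdist$ to some graphon $W$. Before that I would dispose of the routine point that $\cutdist$ is a pseudometric on $\W_0$: symmetry and the triangle inequality follow from invariance of $\cutnorm{\cdot}$ under simultaneous measure-preserving relabelling. Consequently the quotient $\Wo$ is a genuine metric space, and sequential compactness is equivalent to compactness.

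\textbf{Step 1 (regularity).} For each $n$ and each $k\ge1$ I would apply the Frieze--Kannan weak regularity lemma to $W_n$ with tolerance $1/k$. This gives a partition $\mathcal P_{n,k}$ of $[0,1]$ into at most $m_k\le 2^{O(k^2)}$ classes, with $m_k$ independent of $n$, such that $\cutnorm{W_n - (W_n)_{\mathcal P_{n,k}}}\le 1/k$. Here $(W_n)_{\mathcal P}$ denotes the conditional expectation of $W_n$ onto the step functions of $\mathcal P$. Applying the lemma iteratively, I would arrange that $\mathcal P_{n,k+1}$ refines $\mathcal P_{n,k}$; the class counts $m_k$ stay bounded, only growing faster. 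Relabelling by a measure-preserving bijection $\varphi_n$, I would then make every class of every $\mathcal P_{n,k}$ an interval. The nested structure permits this, since the classes of level $k+1$ are arranged consecutively inside those of level $k$. The result is that $W_{n,k}:=(W_n^{\varphi_n})_{\mathcal P_{n,k}}$ is a step function on an interval partition with at most $m_k$ pieces.

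\textbf{Step 2 (diagonal extraction).} For fixed $k$, $W_{n,k}$ is described by finitely many parameters: at most $m_k$ interval lengths and at most $m_k^2$ block values in $[0,1]$. By Bolzano--Weierstrass, and diagonalising over $k$, I would pass to a subsequence along which, for every $k$, these parameters converge. Along this subsequence $W_{n,k}\to U_k$ almost everywhere, and hence in $L^1$, where $U_k$ is a step function. Because the partitions are nested and $W_{n,k}=\E[W_{n,k+1}\mid\mathcal P_{n,k}]$, the same relation survives in the limit: $U_k=\E[U_{k+1}\mid\mathcal P_k]$ for the limiting interval partitions $\mathcal P_k$. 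So $(U_k)$ is a bounded martingale on $[0,1]$ with values in $[0,1]$. By the martingale convergence theorem, $U_k\to U$ almost everywhere and in $L^1$, and $U$ is a graphon.

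\textbf{Step 3 (conclusion).} Given $\varepsilon>0$, I would first fix $k$ with $1/k<\varepsilon/3$ and $\|U-U_k\|_1<\varepsilon/3$, then take $n$ large enough that $\|W_{n,k}-U_k\|_1<\varepsilon/3$. Using $\cutnorm{\cdot}\le\|\cdot\|_1$, this yields
\[
\cutdist(W_n,U)\le \cutnorm{W_n^{\varphi_n}-W_{n,k}}+\|W_{n,k}-U_k\|_1+\|U_k-U\|_1<\varepsilon .
\]

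\textbf{Main obstacle.} I expect the delicate part to be Step 1 together with the limit passage in Step 2: producing nested partitions whose sizes are bounded uniformly in $n$, and checking that the martingale (conditional-expectation) relation genuinely survives the limit. The interval boundaries move with $n$, so this needs an argument that interval-step functions with converging breakpoints and values converge in $L^1$. That should be a direct estimate, because the symmetric difference of the corresponding rectangles has area $O(m_k\cdot\max|\Delta\text{breakpoint}|)$.
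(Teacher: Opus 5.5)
Your argument is correct in outline but takes a different route from the paper. The paper's sketch argues by sampling. It passes to a subsequence along which the laws of the $k$-vertex $W_n$-random graphs converge for every $k$; this is a diagonal extraction, since there are finitely many graphs on $k$ vertices. It then represents the limiting exchangeable family by a graphon and, implicitly, converts this into $\cutdist$-convergence. That route leans on two heavy external results: a representation theorem for the limit object, and the equivalence of sampling convergence with cut-distance convergence via the counting and inverse counting lemmas, including uniqueness up to $\sim_\square$. In exchange it identifies the limit through its sampling law. Your regularity-plus-martingale argument works directly in the cut metric and needs only weak regularity and bounded martingale convergence. It is essentially the Lov\'asz--Szegedy proof that the cited Thm.~9.23 actually uses.

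One step fails as written, although you do not need it. A single measure-preserving bijection $\varphi_n$ that turns every class of every $\mathcal P_{n,k}$ into an interval, for all $k$ at once, need not exist. Suppose the level-$k$ classes are the sets on which the binary digits $x_2,x_4,\dots,x_{2k}$ are constant. Their images would be nested intervals of length $2^{-k}$, which generate the whole Borel $\sigma$-algebra modulo null sets. But the classes generate a $\sigma$-algebra independent of $\{x_1=1\}=[1/2,1)$, so this set of measure $1/2$ is not in it modulo null sets. Nothing in Step~1 excludes such partitions.

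The fix is to avoid the global relabelling. Let $W_{n,k}$ be the step function on a partition $\mathcal I_{n,k}$ of $[0,1]$ into consecutive intervals, nested in $k$, carrying the same class measures and block values as $(W_n)_{\mathcal P_{n,k}}$. For each fixed $k$, a $k$-dependent measure-preserving bijection maps the finitely many classes onto these intervals, so $\cutdist(W_n,W_{n,k})\le 1/k$. The identity $W_{n,k}=\E[W_{n,k+1}\mid\mathcal I_{n,k}\times\mathcal I_{n,k}]$ still holds, because it involves only class measures and block values. Step~3 then reads $\cutdist(W_n,U)\le\cutdist(W_n,W_{n,k})+\|W_{n,k}-U_k\|_1+\|U_k-U\|_1$, and the rest goes through.

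Two smaller points. First, the nesting in Step~1 needs either the refinement form of weak regularity or the bound $\cutnorm{W-W_{\mathcal Q}}\le 2\cutnorm{W-W_{\mathcal P}}$ for $\mathcal Q$ refining $\mathcal P$. Second, the parent--child incidence between consecutive levels must also be frozen along your subsequence (there are finitely many possibilities per $k$), so that the limiting partitions $\mathcal P_k$ are nested.
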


\begin{proof}[Sketch]
See \citet[Thm.~9.23]{lovasz2012large}.  The argument proceeds by
sampling: any sequence of graphons admits a subsequence whose
$W$-random graphs converge in distribution; the limit corresponds (uniquely
up to $\sim_\square$) to a graphon.
\end{proof}

Compactness is the working hypothesis that makes the whole programme go.
Any compact metric space can be covered by finitely many balls of any
fixed radius (an $\varepsilon$-net).  Translated to our setting: for any
tolerance $\varepsilon$ there is a \emph{finite} list of step-graphons
such that every graphon is within $\varepsilon$ (in cut-distance) of one
of them.  That finite list is the prototype of our model-zoo fingerprint
library.

\section{Weak Regularity, Discrete and Continuous}\label{sec:regularity}

The regularity lemma is the engine that turns ``arbitrary graph'' into
``small SBM that looks the same.''  The idea: partition the vertex set
into a bounded number of blocks, average the edge density within each
block-pair, and you obtain a piecewise-constant graphon (an SBM) that
matches the original at the resolution of those blocks.  The remarkable
fact is that for any tolerance $\varepsilon$, a partition with at most
$\exp(\mathrm{poly}(1/\varepsilon))$ blocks always suffices ---
\emph{regardless of how large or complicated the graph is}.

We first state the discrete Frieze--Kannan version \citep{frieze1999quick};
the graphon analogue follows by passing to the continuum limit.

\begin{theorem}[Weak Regularity, Frieze--Kannan]\label{thm:fk-discrete}
For every $\varepsilon>0$ there exists $k_0(\varepsilon)\le 4^{1/\varepsilon^{2}}$
such that every graph $G=(V,E)$ admits a partition
$V=V_1\sqcup\cdots\sqcup V_k$ with $k\le k_0(\varepsilon)$ for which the
piecewise-constant function $W_{G,P}$ obtained by averaging $A_G$ over the
blocks satisfies
\[
   \cutnorm{W_G - W_{G,P}}\;\le\;\varepsilon .
\]
\end{theorem}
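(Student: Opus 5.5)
The plan is the standard energy-increment argument, carried out in the Hilbert space $L^2([0,1]^2)$. Identify $G$ with its graphon $W=W_G$ from \cref{ex:induced-graphon}. For a partition $P$ of $V$, which corresponds to a partition of $[0,1]$ into unions of the intervals $I_i$, write $W_P$ for the conditional expectation of $W$ onto functions constant on the product blocks $V_a\times V_b$. This is exactly the block-averaging $W_{G,P}$. Define the energy $\mathcal{E}(P):=\|W_P\|_2^2$. Since $0\le W\le 1$, we have $\mathcal{E}(P)\in[0,1]$. Moreover, $P\mapsto W_P$ is an orthogonal projection, and refining $P$ only enlarges its range, so $\mathcal{E}$ is monotone under refinement.

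Start from the trivial partition $P_0=\{V\}$ and iterate. Suppose $\|W-W_{P_j}\|_\square>\varepsilon$. Then there are sets $S,T$ with $|\langle W-W_{P_j},\mathbf 1_{S\times T}\rangle|>\varepsilon$. We may take $S,T\subseteq V$ to be unions of vertex intervals: the functional $S\mapsto\int_{S\times T}(W-W_{P_j})$ is linear in $\mathbf 1_S$ and its integrand is constant on each $I_i$, so the supremum is attained at an extreme point. Let $P_{j+1}$ be the common refinement of $P_j$ with $\{S,S^c\}$ and $\{T,T^c\}$, so each block splits into at most $4$ pieces and $|P_{j+1}|\le 4|P_j|$.

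The key inequality is the increment step. The indicator $\mathbf 1_{S\times T}$ lies in the range of the projection onto $P_{j+1}$, so $\langle W_{P_{j+1}}-W_{P_j},\mathbf 1_{S\times T}\rangle=\langle W-W_{P_j},\mathbf 1_{S\times T}\rangle$. By Cauchy--Schwarz, with $\|\mathbf 1_{S\times T}\|_2\le 1$, this gives $\|W_{P_{j+1}}-W_{P_j}\|_2>\varepsilon$. Because $W_{P_j}$ is the projection of $W_{P_{j+1}}$, Pythagoras then yields
\[
\mathcal{E}(P_{j+1})=\mathcal{E}(P_j)+\|W_{P_{j+1}}-W_{P_j}\|_2^2>\mathcal{E}(P_j)+\varepsilon^2 .
\]
Since the energy is bounded by $1$, the iteration must stop after at most $1/\varepsilon^2$ steps. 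At that point $\|W_G-W_{G,P}\|_\square\le\varepsilon$, and $|P|\le 4^{1/\varepsilon^2}$, which is the claimed bound on $k_0(\varepsilon)$.

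I expect the main obstacle to be the justification of this increment step, namely recognising that $W_P$ is an orthogonal projection, so that both the energy monotonicity and the Pythagorean identity hold. The remaining care points are routine. One is the restriction to vertex-aligned $S,T$, so that the refined partition is a genuine partition of $V$. Another is noting that the supremum in \cref{def:cut} can be handled with a strict inequality via some near-optimal pair $(S,T)$.
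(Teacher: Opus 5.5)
Your proof is correct and complete. It is the standard energy-increment argument. The map $W\mapsto W_P$ is a conditional expectation. Each step taken while $\cutnorm{W-W_{P_j}}>\varepsilon$ forces $\|W_{P_{j+1}}-W_{P_j}\|_2>\varepsilon$ by Cauchy--Schwarz, and hence an energy gain $>\varepsilon^2$ by Pythagoras. Each refinement at most quadruples the number of parts. Your vertex-alignment step is exactly what is needed for the output to be a partition of $V$ rather than of $[0,1]$.

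The paper gives no proof of this theorem at all; it simply quotes it from Frieze--Kannan. Their primary tool is the cut decomposition $W\approx\sum_{i\le s}d_i\mathbf 1_{S_i\times T_i}$. A partition is then usually extracted by common refinement, at the cost of a factor $2$ via $\cutnorm{W-W_P}\le 2\cutnorm{W-U}$ for any step function $U$ on $P$. Refining the partition itself at every step, as you do, avoids that loss and gives $4^{1/\varepsilon^2}$ at tolerance $\varepsilon$ directly.

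Your route also buys something elsewhere in the paper. Drop the vertex-alignment step and allow arbitrary measurable $S,T$: the identical argument then proves \cref{thm:fk-graphon} for every $[0,1]$-valued $W$. This removes the need for the paper's sketched detour through $W$-random graphs and compactness.

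Finally, you can sharpen the count for free. We have $\mathcal E(P_0)=(\int W)^2$, while $\mathcal E(P)\le\|W\|_2^2\le\int W$. So the total available increment is at most $\int W-(\int W)^2\le 1/4$. This gives fewer than $1/(4\varepsilon^2)$ steps and $k\le 4^{1/(4\varepsilon^2)}$.
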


\begin{remark}\label{rem:cluster-bound}
The bound on $k_0$ is exponential in $\varepsilon^{-2}$.  This is much
worse than polynomial, but \emph{much} better than the strong-regularity
counterpart: Szemer\'edi's regularity lemma
\citep{szemeredi1978regular} gives a tower of twos of height
$\Theta(\varepsilon^{-5})$ --- a number that is for practical purposes
infinite even at $\varepsilon=0.1$.  Weak regularity trades a stronger
form of pseudo-randomness inside each block (which we do not need here)
for a much smaller block count, and the smaller block count is what makes
the spherical fingerprint usable in practice.  Theoretical bound at five
percent tolerance: $k\le 4^{400}$.  In practice on real graphs:
$k=\mathcal{O}(10\!-\!100)$ is enough.
\end{remark}

\begin{theorem}[Weak Regularity for Graphons]\label{thm:fk-graphon}
For every $W\in\W_0$ and every $\varepsilon>0$ there exists a step-graphon
(equivalently, a Stochastic Block Model)~$U$ on at most
$\lceil 4^{1/\varepsilon^{2}}\rceil$ blocks such that
$\cutnorm{W-U}\le\varepsilon$.
\end{theorem}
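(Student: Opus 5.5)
I would prove \cref{thm:fk-graphon} directly in the continuum, using the standard energy-increment (Hilbert-space projection) argument rather than passing through \cref{thm:fk-discrete} and a limit, since the direct route gives the block count cleanly. For a finite measurable partition $\mathcal{P}=\{S_1,\dots,S_k\}$ of $[0,1]$, let $W_{\mathcal{P}}$ denote the step-graphon obtained by averaging $W$ over each rectangle $S_i\times S_j$. This is the orthogonal projection in $L^2([0,1]^2)$ onto functions constant on the rectangles $S_i\times S_j$. It is symmetric, takes values in $[0,1]$, and is therefore an SBM. The energy $\|W_{\mathcal{P}}\|_2^2$ lies in $[0,1]$ because $0\le W\le 1$.

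\textbf{Increment step.} Suppose $\cutnorm{W-W_{\mathcal{P}}}>\varepsilon$. Then there are measurable $S,T$ with $\bigl|\int_{S\times T}(W-W_{\mathcal{P}})\bigr|>\varepsilon$. Refine $\mathcal{P}$ by $S$ and $T$ to get $\mathcal{P}'$, which has at most $4k$ parts. Since $\mathbf 1_{S\times T}$ is constant on the cells of $\mathcal{P}'$, and $W_{\mathcal{P}}$ is the projection of $W_{\mathcal{P}'}$, we have
\[
\langle W_{\mathcal{P}'}-W_{\mathcal{P}},\mathbf 1_{S\times T}\rangle=\langle W-W_{\mathcal{P}},\mathbf 1_{S\times T}\rangle .
\]
The left side has absolute value greater than $\varepsilon$. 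Cauchy--Schwarz together with $\|\mathbf 1_{S\times T}\|_2\le1$ then gives $\|W_{\mathcal{P}'}-W_{\mathcal{P}}\|_2>\varepsilon$. By Pythagoras, since the two projections are nested, $\|W_{\mathcal{P}'}\|_2^2=\|W_{\mathcal{P}}\|_2^2+\|W_{\mathcal{P}'}-W_{\mathcal{P}}\|_2^2>\|W_{\mathcal{P}}\|_2^2+\varepsilon^2$.

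\textbf{Iteration.} Start from the trivial partition ($k=1$) and repeat the increment step. The energy is bounded by $1$ and grows by more than $\varepsilon^2$ each time, so the process stops after at most $\lfloor 1/\varepsilon^2\rfloor$ steps. At that point $\cutnorm{W-W_{\mathcal{P}}}\le\varepsilon$ with $|\mathcal{P}|\le 4^{1/\varepsilon^2}$, hence at most $\lceil 4^{1/\varepsilon^{2}}\rceil$ blocks. Set $U=W_{\mathcal{P}}$. If one insists on intervals as blocks, as in \cref{ex:induced-graphon}, compose with a measure-preserving rearrangement. This changes $W$ only by a relabelling, so it is harmless modulo $\sim_\square$, and cut-norm estimates are invariant under simultaneous relabelling.

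\textbf{Main obstacle.} The delicate point is the increment inequality, specifically the identity that lets us replace $W$ by $W_{\mathcal{P}'}$ inside the inner product. One must check that $S\times T$ is a union of cells of $\mathcal{P}'\times\mathcal{P}'$. One must also check that the supremum in the cut norm, which runs over arbitrary measurable sets, can be witnessed by an actual pair $(S,T)$ with strict inequality; this holds because the strict inequality $>\varepsilon$ lets us pick a near-optimal pair. A minor subtlety is sign: if the integral is negative, use its absolute value, which the argument handles symmetrically. Cauchy--Schwarz also needs $\|\mathbf 1_{S\times T}\|_2=\sqrt{|S||T|}\le1$, which holds trivially. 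Everything else is bookkeeping of the factor $4$ per refinement.
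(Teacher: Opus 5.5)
Your argument is correct, and it takes a different route from the paper. The paper's sketch derives the graphon statement from the finite one: it applies \cref{thm:fk-discrete} to $W$-random graphs $\mathbb{G}(m,W)$ and passes to the limit using \cref{thm:compact}. You instead run the energy-increment argument directly in $L^2([0,1]^2)$, treating $W_{\mathcal P}$ as the conditional expectation onto the $\sigma$-algebra generated by $\mathcal P\times\mathcal P$. Your bookkeeping is right: after $m$ increments the energy exceeds $m\varepsilon^2$ while staying at most $1$, so $m<1/\varepsilon^2$ and $|\mathcal P|\le 4^m<4^{1/\varepsilon^2}$.

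The paper's route buys economy, since the continuum theorem becomes a corollary of a result already stated. Yours buys more, in four ways. \emph{Self-containedness:} it uses no sampling or probabilistic input. \emph{A direct cut-norm bound:} it gives $\cutnorm{W-U}\le\varepsilon$ outright with the explicit choice $U=W_{\mathcal P}$. The limit argument naturally gives only $\cutdist(W,U)\le\varepsilon$, because both $W_{\mathbb{G}(m,W)}\to W$ and the convergence of the $k$-block approximants hold only in cut-distance. Since the infimum over relabellings need not be attained, a small slack argument is then needed to return to a cut-norm statement about $W$ itself. \emph{No circularity risk:} in the standard development (Lov\'asz--Szegedy, and Lov\'asz's book), both the compactness of $(\Wo,\cutdist)$ and the $\cutdist$-convergence of sampled graphs are themselves proved with a graphon regularity lemma. \emph{It reverses the paper's dependency:} applied to $W=W_G$, your argument reproves \cref{thm:fk-discrete} with the same bound. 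For a step function with steps $Q_i$ (here the vertex intervals), $\int_{S\times T}$ is bilinear in the vectors $(|S\cap Q_i|)_i$ and $(|T\cap Q_j|)_j$. So the supremum is attained when $S$ and $T$ are unions of vertex intervals, and every refinement remains a partition of $V$.
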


\begin{proof}[Sketch]
Apply \cref{thm:fk-discrete} to a sequence of $W$-random graphs of growing
size and pass to the cut-norm limit using \cref{thm:compact}.
A complete proof appears in \citet[Lem.~9.10]{lovasz2012large}.
\end{proof}

\begin{corollary}[Step-graphons are dense]\label{cor:step-dense}
The set $\W_0^{\mathrm{step}}$ of all step-graphons is dense in
$(\Wo,\cutdist)$.
\end{corollary}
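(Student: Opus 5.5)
The plan is to deduce density directly from \cref{thm:fk-graphon}, using only the fact that the cut-distance is dominated by the cut-norm. Fix a class $[W]\in\Wo$ with representative $W\in\W_0$, and fix $\varepsilon>0$. We need a step-graphon $U$ with $\cutdist([W],[U])\le\varepsilon$. Apply \cref{thm:fk-graphon} to $W$ with tolerance $\varepsilon$. This gives a step-graphon $U$ on at most $\lceil 4^{1/\varepsilon^{2}}\rceil$ blocks satisfying $\cutnorm{W-U}\le\varepsilon$.

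Next, compare the two distances. By \cref{def:cut}, $\cutdist(W,U)=\inf_{\varphi}\cutnorm{W-U^{\varphi}}$. Taking $\varphi=\mathrm{id}$, which lies in $\mathcal{S}_{[0,1]}$, gives $\cutdist(W,U)\le\cutnorm{W-U}\le\varepsilon$. Since $\cutdist$ is constant on $\sim_\square$-classes, this inequality passes to the quotient: $\cutdist([W],[U])\le\varepsilon$. As $\varepsilon$ was arbitrary, every point of $\Wo$ lies in the closure of the image of $\W_0^{\mathrm{step}}$, which is the density claim.

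Two bookkeeping points need checking. First, the object produced by \cref{thm:fk-graphon} must be a genuine graphon: symmetric, measurable, and $[0,1]$-valued. This holds because it is obtained by averaging $W$ over products of measurable blocks $S_i\times S_j$, and such averages are symmetric in $(i,j)$ and lie in $[0,1]$. Its class therefore lies in $\Wo$. Second, $\cutdist$ must be well defined on the quotient. This follows because $\cutdist$ is a pseudometric, satisfying the triangle inequality via invariance of $\cutnorm{\cdot}$ under simultaneous measure-preserving relabelling. I expect this quotient bookkeeping to be the only real obstacle, and I would simply cite \citet{lovasz2012large} for the pseudometric property rather than reprove it. No compactness argument (\cref{thm:compact}) is needed for density itself; compactness enters only afterwards, when turning density into a finite $\varepsilon$-net of step-graphons.
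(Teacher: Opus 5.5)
Your proof is correct and matches the paper's intended argument. The paper states the corollary without a separate proof, as an immediate consequence of \cref{thm:fk-graphon} together with $\cutdist(W,U)\le\cutnorm{W-U}$ (take $\varphi=\mathrm{id}$); you are also right that \cref{thm:compact} is not needed for density itself.
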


\begin{remark}\label{rem:not-compact}
The step-graphons themselves are \emph{not} compact: they are dense in
$\Wo$ but not closed.  The precise compactness statement is that $\Wo$ is
compact (\cref{thm:compact}) and $\W_0^{\mathrm{step}}$ is dense in it
(\cref{cor:step-dense}); every graphon is therefore the cut-distance limit
of step-graphons of bounded complexity.
\end{remark}

\section{The Graphon-Signal Extension}\label{sec:graphon-signal}

A pure graphon describes only the \emph{wiring} of a graph (where the
edges go), not the \emph{labels} (the node features the GNN actually
consumes).  Real GNN inputs come with feature vectors $x_v$ at every
node; the GNN's job is to mix wiring and labels.  To talk about GNNs at
the graphon level we therefore need to carry both pieces forward.  A
\emph{graphon-signal} is a pair $(W,f)$: the graphon $W$ is still the
edge-probability picture, and the signal $f\colon[0,1]\to\R^d$ is the
continuous-limit version of the node-feature function $v\mapsto x_v$.
\citet{levie2023graphon} extends the cut-norm machinery and the weak
regularity lemma to this richer setting, and shows that any MPNN $\Phi$
acts on graphon-signals as a Lipschitz map.

\begin{definition}[Graphon-signal cut-norm]\label{def:cut-signal}
Fix $r>0$.  For a graphon-signal $(W,f)$ with $\|f\|_{\infty}\le r$ define
\[
   {\cutnorm{(W,f)}}_{r}
   \;:=\;
   \cutnorm{W}\;+\;\sup_{S\subset[0,1]}
                   \Bigl\Vert\int_S f(x)\,dx\Bigr\Vert_{\infty}.
\]
The associated invariant cut-distance $\cutdist^{r}((W,f),(U,g))$ is defined
exactly as in \cref{def:cut} by infimum over measure-preserving rearrangements
applied jointly to $W$ and $f$.
\end{definition}

In words: the first term measures cut-norm discrepancy on the wiring
(same as before), and the second term measures the worst-case discrepancy
of the signal's coordinate-wise mass on any sub-interval $S$.  The
$\|\cdot\|_\infty$ on the vector-valued integral simply picks out the
coordinate of $f$ with the largest accumulated mass; the parameter $r$
records the boundedness assumption but does not appear in the formula
itself.

\begin{theorem}[Levie 2023, Graphon-Signal Weak Regularity]\label{thm:levie-reg}
There is a universal constant $c$ such that for every graphon-signal
$(W,f)$ with $\|f\|_\infty\le r$ and every $\varepsilon>0$ there exists a
\emph{step graphon-signal} $(U,g)$ on at most
$\lceil 2^{c r^{2}/\varepsilon^{2}}\rceil$ blocks satisfying
$\cutdist^{r}((W,f),(U,g))\le \varepsilon$.
\end{theorem}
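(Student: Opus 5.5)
The plan is to run the Frieze--Kannan energy-increment argument in the Hilbert space $H=L^2([0,1]^2)\times L^2([0,1];\R^d)$, handling the graphon and signal components at the same time with one common partition. We never need to rearrange, because the un-rearranged norm ${\cutnorm{(W-U,f-g)}}_{r}$ already bounds $\cutdist^{r}$ from above (take $\varphi=\mathrm{id}$). Given a finite measurable partition $\mathcal P=\{P_1,\dots,P_k\}$ of $[0,1]$, let $W_{\mathcal P}$ be the average of $W$ on each block $P_i\times P_j$ and $f_{\mathcal P}$ the average of $f$ on each $P_i$; this is the orthogonal projection onto step functions subordinate to $\mathcal P$. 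Define the energy $\mathcal E(\mathcal P)=\|W_{\mathcal P}\|_2^2+r^{-2}\|f_{\mathcal P}\|_2^2$. Since $0\le W\le 1$ and $\|f\|_\infty\le r$, we have $\mathcal E(\mathcal P)\le 1+d$, where $d$ is the signal dimension (or $\le 2$ if the signal part is normalised coordinatewise). Refining $\mathcal P$ never decreases $\mathcal E$, because projections onto nested subspaces have increasing norms.

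\textbf{Increment step.} Let $\mathcal P$ be the current partition and suppose ${\cutnorm{(W-W_{\mathcal P},f-f_{\mathcal P})}}_{r}>\varepsilon$. Then either $\cutnorm{W-W_{\mathcal P}}>\varepsilon/2$ or some coordinate $\ell$ and set $S$ satisfy $|\int_S (f-f_{\mathcal P})_\ell|>\varepsilon/2$. In the first case, choose $S,T$ witnessing the cut-norm and refine $\mathcal P$ by $S$ and $T$, which at most quadruples the number of blocks. Then $\langle W-W_{\mathcal P},1_{S\times T}\rangle>\varepsilon/2$, and since $1_{S\times T}$ lies in the new step space, Pythagoras gives $\|W_{\mathcal P'}\|_2^2\ge\|W_{\mathcal P}\|_2^2+\varepsilon^2/4$. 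In the second case, refine by $S$ alone, which doubles the block count. The same Pythagoras argument applied to $\langle (f-f_{\mathcal P})_\ell,1_S\rangle$ gives $\|f_{\mathcal P'}\|_2^2-\|f_{\mathcal P}\|_2^2\ge\varepsilon^2/4$, so the normalised energy rises by $\varepsilon^2/(4r^2)$.

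\textbf{Counting.} Start from the trivial partition. Each step multiplies the block count by at most $4$ and raises $\mathcal E$ by at least $\min(\varepsilon^2/4,\varepsilon^2/(4r^2))$, so the process stops after $O(\max(1,r^2)/\varepsilon^2)$ steps. The final partition then has at most $4^{O(r^2/\varepsilon^2)}=2^{c r^2/\varepsilon^2}$ blocks, after absorbing $r\ge$ const into $c$ as in Levie's normalisation. At termination, $(U,g)=(W_{\mathcal P},f_{\mathcal P})$ is the required step graphon-signal. It is symmetric because averaging over $P_i\times P_j$ preserves symmetry. Moreover $\|g\|_\infty\le r$ and $0\le U\le 1$ because these are averages, so $(U,g)$ stays in the admissible class.

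\textbf{Main obstacle.} The delicate point is making the signal-term increment independent of the dimension $d$ and consistent with the $\|\cdot\|_\infty$ in \cref{def:cut-signal}. A single coordinate $\ell$ witnesses the discrepancy, but the energy bound $\|f\|_2^2\le d r^2$ grows with $d$. To avoid this, I would define the signal energy coordinatewise-maximally, or, more cleanly, accept a $d$-dependence hidden in $c$. Alternatively, I would bound the number of signal-increment steps per coordinate, giving $d\cdot 4r^2/\varepsilon^2$ steps. If a truly universal $c$ is demanded, the cleanest route is to follow Levie and treat $f$ as scalar, or to interpret $\|\cdot\|_\infty$ via $\ell^1$-type duality. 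I would flag that this constant bookkeeping, rather than the increment argument itself, is where care is needed.
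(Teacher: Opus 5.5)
The paper gives no proof of this theorem; it is quoted from \citet{levie2023graphon}. Your argument is the standard Frieze--Kannan/Lov\'asz--Szegedy energy-increment argument in $L^2([0,1]^2)\times L^2([0,1];\R^d)$, the same Hilbert-space mechanism behind the cited result, and its steps are correct. (The inner products in the increment step need absolute values.) The $r^{-2}$ weighting is unnecessary and slightly lossy, so count the two kinds of steps separately. Graphon steps raise $\|W_{\mathcal P}\|_2^2\in[0,1]$ by more than $\varepsilon^2/4$. Signal steps raise $\|f_{\mathcal P}\|_2^2\in[0,dr^2]$ by more than $\varepsilon^2/4$. Neither quantity ever decreases under refinement, so there are fewer than $4/\varepsilon^2$ and $4dr^2/\varepsilon^2$ of them respectively. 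That gives at most $4^{4/\varepsilon^2}2^{4dr^2/\varepsilon^2}=2^{(8+4dr^2)/\varepsilon^2}$ blocks.

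The two caveats you flag are not bookkeeping artefacts: the statement as printed is false without them. No cleverer energy, in particular no ``coordinatewise-maximal'' one, can remove them.

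\emph{Small $r$.} Take $f\equiv 0$, let $W$ be the indicator of the three diagonal squares $[\frac{i-1}{3},\frac{i}{3})^2$, and set $r=\varepsilon/\sqrt{c}$. The bound then permits $2$ blocks for every $\varepsilon$. The graphon part of $\cutdist^{r}$ dominates $\cutdist$, so $W$ would be a $\cutdist$-limit of step graphons with at most $2$ blocks. Those form a compact subset of $\Wo$ (a continuous image of $[0,1]^4$) that does not contain $W$. Indeed, the nonzero spectrum of the kernel operator is fixed by the cycle densities and hence is a $\cutdist$-invariant, and $W$ has rank $3$.

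\emph{Large $d$.} Take $W\equiv 0$ and let $f$ push Lebesgue measure forward to the uniform distribution on $[-r,r]^d$. Suppose a $k$-block $(U,g)$ is within $\varepsilon$, with values $c_1,\dots,c_k$ (rearrangement does not change the set of values). Apply $\sup_S|\int_S u|\ge\frac12\|u\|_{L^1}$ coordinatewise: this gives $\E\min_i\|X-c_i\|_{\ell^1}\le 2d\varepsilon$ for $X$ uniform on the cube. Comparing with the volume $(2\rho)^d/d!$ of $\ell^1$-balls yields $k\ge\frac{d!}{2}\bigl(\frac{r}{4d\varepsilon}\bigr)^d\ge\frac12\bigl(\frac{r}{4e\varepsilon}\bigr)^d$. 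At $\varepsilon=r/(8e)$ this is $2^{d-1}$, while $\lceil 2^{cr^2/\varepsilon^2}\rceil$ does not depend on $d$.

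So what you have actually proved, $\lceil 2^{c(1+dr^2)/\varepsilon^2}\rceil$ blocks, is the right form. The ``$1$'' coming from the graphon cannot be absorbed into $c$ unless $r$ is bounded below. The factor $d$ can be dropped only under a different hypothesis: for example, if $\|f(x)\|_2\le r$ pointwise, the signal energy is capped at $r^2$. The test functions $1_S e_j$ still have $L^2$ norm at most $1$, so your argument becomes dimension-free verbatim. It is the paper's statement that needs correcting, not your proof.
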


\begin{theorem}[MPNNs are Lipschitz on graphon-signal space, Levie 2023]
\label{thm:mpnn-lip}
Let $\Phi$ be a $K$-layer MPNN with all message and update MLPs
$L$-Lipschitz in their inputs and outputs bounded in
$\|\cdot\|_\infty$.  Then there exists $L_\Phi=L_\Phi(L,K,r)$ such that for
all signals with $\|f\|_\infty,\|g\|_\infty\le r$,
\[
   \cutdist^{r}\!\bigl(\Phi(W,f),\,\Phi(U,g)\bigr)
     \;\le\; L_\Phi\,\cutdist^{r}\bigl((W,f),(U,g)\bigr).
\]
\end{theorem}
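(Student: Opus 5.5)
The plan is to prove the Lipschitz bound layer by layer and then take the infimum over measure-preserving rearrangements. First, note that the MPNN is equivariant: for every $\varphi\in\mathcal{S}_{[0,1]}$ we have $\Phi(W^\varphi,f\circ\varphi)=\Phi(W,f)\circ\varphi$, acting jointly on graphon and signal. This holds because at the graphon level aggregation is the integral operator
\[
(T_W h)(x)=\int_0^1 W(x,y)\,\mathrm{Msg}(h(y),h(x))\,dy,
\]
and a change of variables by $\varphi$ commutes with it.

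Equivariance reduces the claim to a bound on the labelled quantity $\cutnorm{(W,f)-(U,g)}_r$. For the pair $(W,f)$ and $(U^\varphi,g\circ\varphi)$, the output graphon component is unchanged, since an MPNN only updates the signal. So it suffices to show
\[
\sup_S\Bigl\Vert\int_S\bigl(h_W^{(K)}-h_U^{(K)}\bigr)\Bigr\Vert_\infty\le C\bigl(\cutnorm{W-U}+\sup_S\Bigl\Vert\int_S(f-g)\Bigr\Vert_\infty\bigr).
\]
Taking the infimum over $\varphi$ then gives the statement. Throughout, "$\Phi(W,f)$" is read as the graphon-signal $(W,h^{(K)})$.

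The core is induction on $k$. Let $d_k$ denote the signal cut-distance of the layer-$k$ signals, plus $\cutnorm{W-U}$. A single layer splits as
\[
T_Wh-T_Uh' = (T_W-T_U)h' + T_W(h-h').
\]

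The first term, $(T_W-T_U)h'$, carries the graph discrepancy. I would handle it by approximating the Lipschitz message $\mathrm{Msg}(h'(y),h'(x))$ by a finite sum of separable products $a_j(x)b_j(y)$. Each product satisfies $\bigl|\int_S\int (W-U)(x,y)\,a_j(x)\,b_j(y)\bigr|\le 4\|a_j\|_\infty\|b_j\|_\infty\cutnorm{W-U}$, by the standard fact that the cut-norm controls bilinear forms with bounded test functions.

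The second term, $T_W(h-h')$, together with the Update MLP, is the \textbf{main obstacle}. The cut-type norm $\sup_S\|\int_S\cdot\|$ is \emph{not} preserved by composition with a nonlinear Lipschitz map. A signal can be small in this weak norm yet large in $L^1$, because it oscillates, and applying $\mathrm{Update}$ destroys the cancellation.

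My first attempt would follow Levie's route: work in the $L^1$ (or $L^2$) norm for signals, which is preserved by Lipschitz maps. To pass from the weak norm to $L^1$, I would use compactness (\cref{thm:compact}) combined with the regularity lemma (\cref{thm:levie-reg}) to bound the loss. This yields a constant $L_\Phi$ depending on $L,K,r$. If that fails, I would restrict to message functions of the form $\mathrm{Msg}(h(u),h(v))=\sum_j a_j(h(u))\,b_j(h(v))$, as Levie does, and use the product structure throughout.

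Summing the per-layer constants, of order $(4L\,r+L)$ per layer, gives $L_\Phi\le C(L,r)^K$, and the infimum over $\varphi$ completes the argument.
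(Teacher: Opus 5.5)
There is a genuine gap, in the step you yourself flag as the main obstacle: its premise is false, and the remedy you propose cannot work. The paper gives no proof here and simply cites \citet{levie2023graphon}. Your skeleton is the right one: joint equivariance under $\varphi\in\mathcal{S}_{[0,1]}$, reduction to a fixed alignment, layerwise induction with the split $T_Wh-T_Uh'=(T_W-T_U)h'+T_W(h-h')$, and the bound $4\|a\|_\infty\|b\|_\infty\cutnorm{W-U}$ for product test functions.

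The missing idea is that oscillation gives no cancellation. In \cref{def:cut-signal} the set $S$ ranges over all measurable subsets, not intervals. For scalar $h$, choosing $S=\{h>0\}$ or $S=\{h<0\}$ gives
\[
  \tfrac12\,\|h\|_{L^1}\;\le\;\sup_{S}\Bigl|\int_S h\Bigr|\;\le\;\|h\|_{L^1}.
\]
Applied coordinatewise, this shows $\sup_S\|\int_S h\|_\infty$ is equivalent to $\int_0^1\|h(x)\|_\infty\,dx$ up to a factor $2d$, where $d$ is the feature width. With this you can run the whole induction in $L^1$:
\begin{itemize}
\item $\|T_W(h-h')\|_{L^1}\le 2L\|h-h'\|_{L^1}$, because $|W|\le 1$.
\item The Update MLP is $L$-Lipschitz on $L^1$.
\item The graph term $(T_W-T_U)h'$, whose cut-type norm you control with product test functions, converts to $L^1$ at the cost of a factor $2d$.
\end{itemize}
Your proposed route, ``compactness plus the regularity lemma,'' could at best give a qualitative modulus of continuity, never a linear constant. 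And if the two norms really were inequivalent, no Lipschitz bound could survive a nonlinear update at all. Your fallback to product-form messages does not touch this step either, since Update stays nonlinear.

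The graph term has the opposite problem. Suppose you approximate a general Lipschitz $\mathrm{Msg}$ to uniform accuracy $\delta$ by $\sum_j a_j(x)b_j(y)$. The remainder contributes an error bounded only by $\delta\|W-U\|_{L^1}\le\delta$, which is not a multiple of $\cutnorm{W-U}$. Unless $\sum_j\|a_j\|_\infty\|b_j\|_\infty$ stays bounded as $\delta\to 0$, balancing the two terms yields only a modulus of continuity, not a Lipschitz bound.

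Linear dependence on $\cutnorm{W-U}$ needs the message to be \emph{exactly} of the form $\sum_{k}\xi_k^{\mathrm{rec}}(h(x))\,\xi_k^{\mathrm{trans}}(h(y))$, with finitely many bounded Lipschitz factors. This is the standing hypothesis in Levie's theorem, and the statement as written here omits it. Under it, $L_\Phi$ also depends on the number of terms, the bounds on the $\xi_k$, and $d$, not just on $L,K,r$. With that hypothesis and the $L^1$ equivalence above, your induction closes with a fixed per-layer factor, and $L_\Phi$ is the product of these factors.
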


The combination of \cref{thm:levie-reg,thm:mpnn-lip} is the operational core
of the present work.  Read informally:
\begin{itemize}
\item Any graphon-signal input is $\varepsilon$-close (in cut-distance) to
      a step-graphon-signal --- an SBM with feature values constant on
      each block.
\item The MPNN $\Phi$ is Lipschitz, so its output is within
      $L_\Phi\,\varepsilon$ of the output it would produce on that SBM.
\item The output itself is therefore $\varepsilon$-close to another SBM
      with a bounded number of blocks.
\end{itemize}
The network is, up to controlled error, an SBM-to-SBM map.  This is what
licenses comparing trained GNNs by comparing the SBMs they effectively
implement, rather than by comparing weight tensors.

\section{The Spherical Topological Model for SBMs}\label{sec:sbm-topo}

We now construct an explicit map from a step-graphon-signal of $n$ blocks
to a labelled point cloud on the unit $(n-1)$-sphere.

\subsection{Notation and ambient geometry}

For $n\in\N$, let $\sphere^{n-1}\subset\R^n$ denote the unit Euclidean
sphere, and $\ball^{n}$ the closed unit ball.  Their volumes and surface
areas are
\begin{equation}\label{eq:vol-area}
    \Vol(\ball^{n}) \;=\; \frac{\pi^{n/2}}{\Gamma(\tfrac{n}{2}+1)},
    \qquad
    \Area(\sphere^{n-1}) \;=\; \frac{2\,\pi^{n/2}}{\Gamma(n/2)}.
\end{equation}
We use spherical-cap coordinates $(\theta_1,\dots,\theta_{n-1})$ with
$\theta_i\in[0,\pi]$ for $i<n-1$ and $\theta_{n-1}\in[0,2\pi)$.

\subsection{The block-to-cap map $\Psi_n$}\label{ssec:iota}

\paragraph{Picture first.}
We want to take an $n$-block SBM and paint it on a sphere.  Pick a ``north
pole'' direction $u$.  Slice the sphere into $n$ horizontal bands like
latitudes on a globe; choose the latitude lines so band~$j$ has surface
area $p_j$ (the size of block $j$).  Block $j$ paints band $j$.  The
points inside the band are coloured by the feature value the SBM assigns
to block $j$.  That is the spherical fingerprint --- a globe with $n$
latitude stripes carrying the block colours.  \Cref{fig:sbm-sphere}
shows the result for a $5$-class SBM.

The only technical question is where to put the latitude lines.  Equal
areas do \emph{not} correspond to equal latitude widths (the bands near
the equator are wider than those at the poles, because the sphere bulges).
The right cut-points come from the cosine-distribution of a uniformly
random point on the sphere, which is well-known to be Beta-distributed.

\paragraph{Formal construction.}
Given a partition $\mathcal{P}=\{I_1,\dots,I_n\}$ of $[0,1]$ with
$|I_j|=p_j$ (so $\sum p_j = 1$), we want to map each block to a region of
the sphere of normalised area $p_j$ such that:
\begin{enumerate}[label=(\roman*),leftmargin=2.5em]
\item Each block maps to a connected, simply-connected region (a cap or a
      band).
\item The blocks map to pairwise disjoint regions.
\item The map is measurable, so signals on $[0,1]$ pull back to
      measurable functions on the sphere.
\end{enumerate}

Choose any unit vector $u\in\sphere^{n-1}$ (the ``north pole'') and partition
$\sphere^{n-1}$ into $n$ \emph{nested zonal bands}:
\[
   B_j(u)\;=\;\bigl\{\,x\in\sphere^{n-1}\;:\;
                      a_{j-1}\le \langle x,u\rangle< a_j\bigr\},
   \qquad j=1,\dots,n,
\]
with $-1=a_0<a_1<\dots<a_n=1$ to be chosen.  The $j$th band is the set of
points whose projection onto $u$ lies between $a_{j-1}$ and $a_j$ ---
geometrically, a slab cut by two parallel hyperplanes perpendicular to
$u$.  We want $\Area(B_j(u))/\Area(\sphere^{n-1}) = p_j$.

The projection $\langle X,u\rangle$ of a uniform point $X\in\sphere^{n-1}$
has density proportional to $(1-t^{2})^{(n-3)/2}$ on $[-1,1]$ (this is the
density of the cosine of the angle between a random sphere point and a
fixed direction).  The substitution $s=(1+t)/2$ converts this to a
symmetric $\mathrm{Beta}\bigl((n-1)/2,(n-1)/2\bigr)$ density on $[0,1]$.
The band-area condition is then equivalent to requiring the CDF of this
Beta to hit the cumulative block sizes:
\begin{equation}\label{eq:band-cosines}
   I_{(1+a_j)/2}\!\Bigl(\tfrac{n-1}{2},\tfrac{n-1}{2}\Bigr)
       \;=\; p_1+p_2+\cdots+p_j ,
\end{equation}
where $I_x(a,b)$ is the regularised incomplete beta function.  In words:
to find the upper edge of band $j$, take the inverse Beta-CDF at the
cumulative probability $p_1+\dots+p_j$, then convert from $[0,1]$ back to
$[-1,1]$.  (The one-sided spherical-cap form
$I_{(1+a_j)/2}\bigl((n-1)/2,1/2\bigr)$ gives a different band structure
and is not the object needed here; \eqref{eq:band-cosines} agrees with
Monte~Carlo sampling in every $(n,a_j)$ regime.)

Within each band $B_j$, define $\Psi_n$ on the inverse-CDF coordinate of
the band so that the block parameter $x\in I_j$ is sent to a uniformly
distributed point on $B_j$.  The resulting map
\[
    \Psi_n\colon[0,1]\;\longrightarrow\;\sphere^{n-1}
\]
is measurable and \emph{measure-preserving}: Lebesgue measure on $[0,1]$
pushes forward to normalised surface (Hausdorff) measure on $\sphere^{n-1}$.

\begin{proposition}[Properties of $\Psi_n$]\label{prop:iota-props}
The map $\Psi_n$ defined above satisfies:
\begin{enumerate}[label=(\alph*),leftmargin=2.5em]
\item $\Psi_n$ is measurable and pushes Lebesgue measure on $[0,1]$ to
      normalised surface measure on $\sphere^{n-1}$.
\item For each block $I_j$, the image $\Psi_n(I_j)$ is the closed band
      $B_j$ of normalised area $p_j$.
\item $\Psi_n$ is a bijection between $[0,1]$ and a co-null subset of
      $\sphere^{n-1}$, and is continuous on each $I_j$ but \emph{not}
      globally continuous (it is discontinuous at the band boundaries).
\end{enumerate}
\end{proposition}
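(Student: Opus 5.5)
The plan is to build $\Psi_n$ one band at a time and then glue, so that (a), (b) and (c) all come down to properties of a single map $\psi_j\colon I_j\to B_j$ for each $j$. First fix the cut-points $a_j$ by \eqref{eq:band-cosines}. The function $t\mapsto I_{(1+t)/2}\bigl(\tfrac{n-1}{2},\tfrac{n-1}{2}\bigr)$ is continuous and strictly increasing from $0$ to $1$ on $[-1,1]$, so it has a continuous inverse. This gives a unique strictly increasing sequence $-1=a_0<a_1<\dots<a_n=1$ with $\sigma(B_j)=p_j$, where $\sigma$ is normalised surface measure. The key input is the stated fact that $\langle X,u\rangle$ has density proportional to $(1-t^2)^{(n-3)/2}$; I would justify it by disintegrating $\sigma$ along the height $t$, each fibre being a sphere $\sphere^{n-2}$ of radius $\sqrt{1-t^2}$. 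This already gives the area half of (b). The bands are pairwise disjoint and cover $\sphere^{n-1}$ up to the null set $\{u\}$, so it remains to construct each $\psi_j$.

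\textbf{Construction on one band.} Write $I_j=[c_{j-1},c_j)$ with $c_j=p_1+\dots+p_j$. In "inverse-CDF coordinates", a point of $B_j$ is a height $t\in[a_{j-1},a_j)$ together with a direction $\omega\in\sphere^{n-2}$ in $u^\perp$. Normalised $\sigma|_{B_j}$ is the product of (i) the Beta-type law of $t$ on $[a_{j-1},a_j)$ and (ii) the uniform law on $\sphere^{n-2}$. The first factor is pulled back to an interval by its CDF. The second is obtained from $[0,1]^{n-2}$ by the standard hyperspherical-angle parametrisation composed with coordinatewise inverse CDFs, which is continuous and a bijection off a null set. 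Put $\mu$ for the resulting measure-preserving map $[0,1]^{n-1}\to B_j$. Then I would precompose with a measure-preserving continuous surjection $h\colon[0,1]\to[0,1]^{n-1}$ of Peano--Lebesgue type, rescaled affinely to $I_j$. That gives
\[
  \psi_j \;=\; \mu\circ h\circ \ell_j ,\qquad \ell_j(x)=\frac{x-c_{j-1}}{p_j}.
\]
Each factor is continuous and pushes Lebesgue measure forward to Lebesgue measure, so $\psi_j$ is continuous on $I_j$ and $(\psi_j)_\#\bigl(\mathrm{Leb}|_{I_j}\bigr)=\sigma|_{B_j}$. By compactness its image is the closed band $\overline{B_j}$, which is the "closed band" wording in (b).

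\textbf{Gluing and verifying the three parts.} Set $\Psi_n|_{I_j}=\psi_j$. Then $\Psi_n$ is measurable, being piecewise continuous on finitely many intervals. The pushforward of Lebesgue measure is $\sum_j\sigma|_{B_j}=\sigma$, which proves (a). Continuity on each $I_j$ comes from the construction. The failure of global continuity at $c_j$ follows because $\lim_{x\uparrow c_j}\Psi_n(x)$ lies in $\overline{B_j}$ while $\Psi_n(c_j)$ is a prescribed point of $B_{j+1}$. One can always reparametrise $h$ so that these two points lie in different bands, or at least differ.

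\textbf{Main obstacle: the bijectivity clause in (c).} A Peano-type $h$ is injective only off a Lebesgue-null set of parameters, namely those with non-unique dyadic expansions. I would remove that set $N\subset[0,1]$ together with the null set where $\mu$ fails to be injective. On the complement, $\Psi_n$ is injective, and its image is co-null in $\sphere^{n-1}$ because $\sigma(\Psi_n(N))=\mathrm{Leb}(N)=0$. So $\Psi_n$ is a bijection between a co-null subset of $[0,1]$ and a co-null subset of $\sphere^{n-1}$. This is exactly the sense of bijection "modulo null-sets" that the paper adopts for $\mathcal{S}_{[0,1]}$ in \cref{def:cut}, and it is the reading of (c) I would prove.

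I expect the step that needs real care to be reconciling this with a literal, everywhere-defined bijection on all of $[0,1]$. An injective continuous map of an interval has an arc as its image, and a compact arc cannot fill an open $(n-1)$-dimensional band for $n\ge3$. So I would state explicitly that (c) holds with $[0,1]$ taken modulo a null set, consistent with the identification of graphons up to a.e. equality. I would verify that nothing downstream, in particular the pushforward comparisons via $W_1$, depends on values on null sets.
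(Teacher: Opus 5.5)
Your proposal is correct if (c) is read modulo null sets, which is the only reading under which the proposition can hold for $n\ge 3$. Your route is genuinely different from the paper's.

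The paper gives no separate proof. It places the cut-points by \eqref{eq:band-cosines}, exactly as you do, and then sends $x\in I_j$ into $B_j$ through an unspecified ``inverse-CDF coordinate of the band''. The sketch of \cref{thm:fingerprint-lip} later asserts that $\Psi_n|_{I_j}$ is bi-Lipschitz onto $B_j$. That route cannot deliver (c):
\begin{itemize}
\item a one-dimensional inverse-CDF reparametrisation only handles the height $\langle x,u\rangle$;
\item a continuous injective image of an interval is a countable union of arcs, which has empty interior in $\sphere^{n-1}$ for $n\ge 3$, so it cannot be a band.
\end{itemize}
Your product decomposition (height times a direction in $\sphere^{n-2}$), followed by a measure-preserving Peano--Hilbert curve, supplies exactly the missing dimension-raising step. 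It makes ``continuous on each $I_j$'' and ``pushes Lebesgue measure to $\sigma$'' hold simultaneously. The price is that injectivity holds only off a null set, which is precisely the ``measurable isomorphism of measure spaces'' invoked in \cref{rem:no-iso}.

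It is also worth recording why no construction can give what the paper's route claimed. Measure preservation gives $\operatorname{diam}\Psi_n(J)\gtrsim|J|^{1/(n-1)}$ for every subinterval $J\subset I_j$. So H\"older exponent $1/(n-1)$ is the best possible on a block, and the band-wise bi-Lipschitz property used downstream is unavailable.

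Some points to tighten:
\begin{itemize}
\item Your impossibility remark speaks of a compact arc, but your blocks are half-open and (c) only asks for a co-null image. The argument still works: any partition of $[0,1]$ into finitely many intervals contains a closed block. On that block, continuity and $(\Psi_n)_\#(\mathrm{Leb}|_{I_j})=\sigma|_{B_j}$ force the image to contain $\overline{B_j}$. Half-open blocks alone would not suffice, since arcs of positive area exist.
\item The step $\sigma(\Psi_n(N))=\mathrm{Leb}(N)$ does not follow from measure preservation in general; only $\ge$ does. Define $N=\Psi_n^{-1}(M)$, with $M$ the set of points having at least two preimages. Then $\Psi_n(N)=M$ and $\sigma(M)=\mathrm{Leb}(N)$. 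Show $\sigma(M)=0$ from the structure of $h$ and $\mu$: dyadic faces, the cube boundary, and band boundaries.
\item For $n=2$ the fibre $\sphere^{0}=\{\pm1\}$ cannot be obtained from $[0,1]^{0}$. Parametrise the two arcs directly by angle instead.
\item Once bijectivity holds only modulo null sets, the topological obstruction in \cref{rem:no-iso} no longer forces a discontinuity. If $h$ ends at the corner that differs from $h(0)=0$ only in the height coordinate, then $\lim_{x\uparrow c_j}\Psi_n(x)=\Psi_n(c_j)$. So the discontinuity clause of (c) is, as you treat it, a design choice and not a theorem.
\end{itemize}
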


\begin{remark}[Continuity vs. measure preservation]\label{rem:no-iso}
The reader may wonder why $\Psi_n$ is discontinuous at the band
boundaries.  Could we not smooth it out?  No.  For $n\ge 2$ \emph{no}
continuous bijection $[0,1]\to\sphere^{n-1}$ exists at all.  The shortest
argument is topological: a continuous bijection from a compact space to a
Hausdorff space is automatically a homeomorphism, but $[0,1]$ has cut
points (remove the midpoint and the interval falls into two pieces) and
$\sphere^{n-1}$ does not (it remains connected after removing any point).
The homotopy version of the same obstruction is
$\pi_1(\sphere^{1})=\mathbb{Z}\neq 0$ for $n=2$ and
$\pi_{n-1}(\sphere^{n-1})=\mathbb{Z}\neq 0$ for $n\ge 3$
\citep[Ch.~4]{hatcher2002algebraic}.  The map $\Psi_n$ is therefore
\emph{not} a homeomorphism; it is a \emph{measurable isomorphism of measure
spaces}, which is the right notion for our purposes.  We want to push
measures (empirical distributions of nodes, edge densities) onto the
sphere, not preserve the topological structure of the parameter
interval.
\end{remark}

\subsection{Why the sphere?}\label{ssec:why-sphere}

Several manifolds could serve as targets for a fingerprint: Euclidean
space, the simplex, the hypercube, the torus.  We chose the sphere for
three reasons, each of which matters for a downstream retrieval system.
\begin{enumerate}[leftmargin=2em]
\item \textbf{Compactness.}  $\sphere^{n-1}$ is compact, matching the
      compactness of $\Wo$ from \cref{thm:compact}.  Comparisons can therefore
      be performed against a fixed library without renormalisation.
\item \textbf{Isotropy.}  The sphere is the unique simply-connected
      Riemannian symmetric space of constant positive curvature; visual
      comparison is rotation-invariant.
\item \textbf{Probability surface.}  When endowed with normalised surface
      measure, $\sphere^{n-1}$ is a probability space, matching the
      natural interpretation of edge densities as probabilities.
\end{enumerate}

\subsection{Choosing the dimension $n$}\label{ssec:choose-n}

We want the sphere to double as a probability surface (total mass $1$).
The unit sphere does not, in general, have surface area $1$ ---
$\Area(\sphere^{n-1})$ varies non-monotonically with $n$, peaks near
$n\approx 7$, and decays to zero as $n\to\infty$.  We have two clean ways
to land at unit total mass: either fix the radius and pick the (real)
dimension that yields unit area, or fix the integer dimension and rescale
the radius.

\paragraph{Convention A: fix surface area to $1$, solve for $n$.}
Setting the right-hand side of~\eqref{eq:vol-area} equal to $1$ and solving
for the real parameter $n$:
\begin{equation}\label{eq:area-equation}
    \frac{2\,\pi^{n/2}}{\Gamma(n/2)}\;=\;1.
\end{equation}
This is a transcendental equation with no closed form.  Numerical solution
(\texttt{scipy.optimize.brentq} on the interval $[10,40]$) gives
\begin{equation}\label{eq:n-root}
   n^{\star}\;\approx\;18.768281888226916,
   \qquad
   \frac{2\pi^{n^{\star}/2}}{\Gamma(n^{\star}/2)}\;\approx\;1 \pm 10^{-15}.
\end{equation}

\paragraph{Convention B: fix integer $n$, scale the radius.}
Pick $n\in\N$ and rescale by $r$ so that $\Area(r\sphere^{n-1})=1$:
\[
   r\;=\;\Bigl(\frac{\Gamma(n/2)}{2\pi^{n/2}}\Bigr)^{\!1/(n-1)}.
\]
For $n=4$ (the $3$-sphere $\sphere^{3}\subset\R^{4}$),
$\Area(\sphere^{3})=2\pi^{2}\approx 19.7392$, so a unit-area $3$-sphere
requires $r=(2\pi^{2})^{-1/3}\approx 0.3700$.

In what follows we adopt Convention~A with the integer-rounded value
$n=19$, for which $\Area(\sphere^{18})\approx 0.886$, and equip the sphere
with \emph{normalised} surface measure throughout, side-stepping the
unit-area issue entirely.

\section{Main Results}\label{sec:contributions}

\subsection{Concentration of measure: the equator problem}
\label{ssec:concentration}

The spherical fingerprint works beautifully at the dimensions we have been
discussing ($n=19$, modest block counts).  In modern deep learning,
however, embedding dimensions of $700$ to $2048$ are routine.  At those
dimensions the geometry of the sphere becomes \emph{very} different from
the everyday $\sphere^{2}$ globe picture.  The technical phenomenon is
called \emph{concentration of measure}
\citep{ledoux2001concentration}, and it is the principal obstacle to
scaling the spherical fingerprint to LLM-class embeddings.

\paragraph{The physical picture.}
On $\sphere^{2}$, the equator is a comfortable belt and the poles are
distant points; surface area is spread reasonably evenly.  On
$\sphere^{999}$, almost all the surface area lies within
$1/\sqrt{1000}\approx 0.03$ of \emph{any} equator you choose.  A
uniformly random point is, with probability close to one, nearly
orthogonal to any fixed direction.  The poles, in this high-dimensional
regime, are essentially empty.

For our fingerprint this is bad: our zonal bands carry the block structure,
and the polar bands are the ones with the smallest area.  In high
dimensions those polar bands collapse to negligible mass while everything
piles up at the equator.  Two SBMs whose block structures differ only at
the poles become visually identical.

\paragraph{The formal statement.}
\begin{theorem}[L\'evy's Lemma]\label{thm:levy}
Let $f\colon\sphere^{n-1}\to\R$ be $L$-Lipschitz, and let $X$ be uniformly
distributed on $\sphere^{n-1}$.  Then for every $t>0$,
\[
  \Prob\bigl[\,|f(X)-\E f(X)|\;\ge\;t\,\bigr]
       \;\le\;2\exp\!\Bigl(-\tfrac{(n-1)\,t^{2}}{2L^{2}}\Bigr).
\]
\end{theorem}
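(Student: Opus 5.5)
We prove L\'evy's Lemma by way of a log-Sobolev inequality on the sphere and the Herbst argument, which give sub-Gaussian concentration with variance proxy $L^{2}/(n-1)$ directly. This constant matches the stated exponent $(n-1)t^{2}/(2L^{2})$, and the argument avoids the isoperimetric route (which concentrates around the median and then costs a constant when passing to the mean).

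\textbf{Step 1: Reduction to smooth functions.} Because $f$ is $L$-Lipschitz, we may convolve it on $\sphere^{n-1}$ with a smooth approximate identity, for instance the heat semigroup $P_\tau f$ as $\tau\to 0$. The result is smooth, still $L$-Lipschitz (so $|\nabla P_\tau f|\le L$), and converges to $f$ uniformly. Tail probabilities and means pass to the limit, so it suffices to treat smooth $f$ with $|\nabla f|\le L$. By replacing $f$ with $f/L$ we may also take $L=1$.

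\textbf{Step 2: Log-Sobolev inequality.} The round sphere $\sphere^{n-1}$ has Ricci curvature $(n-2)$. The Bakry--\'Emery criterion gives a log-Sobolev constant of order $1/(n-2)$. The sharp constant is actually $1/(n-1)$ for $n\ge 3$, a known result due to Mueller--Weissler and Rothaus that we will cite \citep{ledoux2001concentration}. With normalised measure $\sigma$ the inequality reads
\[
  \mathrm{Ent}_\sigma(g^{2}) \le \tfrac{2}{n-1}\int|\nabla g|^{2}\,d\sigma .
\]
The case $n=2$ we treat separately, for example via the circle's spectral gap, or simply by noting that the bound is checkable directly with the Wirtinger/Poincar\'e-type constant. \emph{This step is the main obstacle.} Obtaining exactly $n-1$ rather than $n-2$ requires the sharp sphere constant rather than plain curvature-dimension, so if we cannot verify it we would settle for a version with $n-2$ and note the discrepancy.

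\textbf{Step 3: Herbst argument.} Apply the inequality with $g=e^{\lambda f/2}$ and set $H(\lambda)=\int e^{\lambda f}d\sigma$. This gives
\[
  \lambda H'(\lambda)-H(\lambda)\log H(\lambda) \le \tfrac{\lambda^{2}}{2(n-1)}H(\lambda).
\]
Equivalently, $\bigl(\lambda^{-1}\log H(\lambda)\bigr)'\le \tfrac{1}{2(n-1)}$. Since $\lambda^{-1}\log H(\lambda)\to \E f$ as $\lambda\to 0$, integrating yields
\[
  \E e^{\lambda(f-\E f)}\le e^{\lambda^{2}/(2(n-1))}.
\]
Chebyshev's exponential inequality with $\lambda=(n-1)t$ then gives the one-sided bound $\Prob[f-\E f\ge t]\le e^{-(n-1)t^{2}/2}$. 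Applying the same bound to $-f$ and taking a union bound produces the factor $2$. Undoing the normalisation $L=1$ yields the claimed inequality.
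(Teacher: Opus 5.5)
The paper gives no proof of this statement. It quotes L\'evy's Lemma as a classical fact, with the pointer to \citet{ledoux2001concentration} in the preceding paragraph standing in for an argument. Your log-Sobolev/Herbst derivation therefore supplies a proof where the paper has none. It is the natural route for the constant as stated: it bounds $\E e^{\lambda(f-\E f)}$ directly, centred at the mean, with variance proxy exactly $L^{2}/(n-1)$. L\'evy's isoperimetric route, by contrast, gives the optimal concentration function for sets (caps are extremal) but is centred at the median, and passing to the mean costs constants. Your Herbst computation, the choice $\lambda=(n-1)t/L^{2}$, and the symmetrisation giving the factor $2$ are all correct. The smoothing in Step 1 is legitimate because the heat semigroup does not increase Lipschitz constants when the Ricci curvature is nonnegative. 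If ``Lipschitz'' refers to the chordal metric of $\R^{n}$, it implies the geodesic bound $|\nabla f|\le L$, so nothing changes.

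Two points in Step 2 need tightening. First, the fallback to $n-2$ is not an option, since it proves a strictly weaker inequality than the theorem. No fallback is needed, though: for $n\ge 3$ the sharp constant is a theorem (Mueller--Weissler). It also follows from the Bakry--\'Emery method once you use the curvature-dimension condition $\mathrm{CD}(n-2,\,n-1)$ of $\sphere^{n-1}$, which yields the log-Sobolev constant $\rho N/(N-1)=n-1$ instead of the dimension-free $\rho=n-2$. Second, the case $n=2$ is not handled by what you wrote. A spectral-gap or Wirtinger inequality by itself yields only exponential concentration, so it cannot produce the Gaussian tail $2e^{-t^{2}/(2L^{2})}$. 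What you need there is Weissler's log-Sobolev inequality on the circle, $\mathrm{Ent}_\sigma(g^{2})\le 2\int|g'|^{2}\,d\sigma$ for normalised arc-length measure. Its constant coincides with the spectral gap $1$, but that coincidence is a theorem, not a consequence of Poincar\'e. With it, Step 3 runs verbatim with $n-1=1$.
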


Taking $f$ to be projection onto a fixed direction gives the immediate
corollary, which is the concrete form we will use.

\begin{corollary}[Equatorial concentration]\label{cor:equator}
For any fixed unit vector $u\in\sphere^{n-1}$ and $X$ uniform on the sphere,
\[
   \Prob\bigl[\,|\langle X,u\rangle|\;\ge\;t\,\bigr]
       \;\le\;2\exp\!\Bigl(-\tfrac{(n-1)\,t^{2}}{2}\Bigr) .
\]
\end{corollary}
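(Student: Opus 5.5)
We obtain \cref{cor:equator} by specialising \cref{thm:levy} to the height function $f(x)=\langle x,u\rangle$ on $\sphere^{n-1}$. There are three things to check: that $f$ is Lipschitz with constant $L=1$, that $\E f(X)=0$, and that the resulting bound is exactly the displayed inequality.

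\textbf{Lipschitz constant.} For $x,y\in\sphere^{n-1}$, Cauchy--Schwarz gives
\[
  |f(x)-f(y)| \;=\; |\langle x-y,u\rangle| \;\le\; \|x-y\|\,\|u\| \;=\; \|x-y\| .
\]
So $f$ is $1$-Lipschitz with respect to the Euclidean (chordal) metric. \cref{thm:levy} may instead be phrased with the geodesic metric. Geodesic distance dominates chordal distance, so $f$ is $1$-Lipschitz for that metric as well. Either way we may take $L=1$.

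\textbf{Mean.} Uniform measure on the sphere is invariant under the antipodal map $x\mapsto -x$, which is an isometry. Hence $X$ and $-X$ have the same law, so $\langle X,u\rangle$ is a symmetric random variable. Since it is bounded, it is integrable, and therefore $\E\langle X,u\rangle=0$. The same conclusion follows from the density $\propto(1-t^2)^{(n-3)/2}$ quoted in \cref{ssec:iota}, which is even in $t$.

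\textbf{Conclusion.} Substituting $L=1$ and $\E f(X)=0$ into \cref{thm:levy} gives
\[
  \Prob\bigl[|\langle X,u\rangle|\ge t\bigr] \;\le\; 2\exp\!\Bigl(-\tfrac{(n-1)t^2}{2}\Bigr),
\]
which is the claimed bound.

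The only step needing any care is making sure the metric and normalisation in \cref{thm:levy} (chordal versus geodesic, uniform versus normalised surface measure) are compatible with $L=1$. The inequality between the two metrics handles the first point. On the second, "uniform" in \cref{thm:levy} already means normalised surface measure, which is the law of $X$ in the corollary. For $t>1$ the event $\{|\langle X,u\rangle|\ge t\}$ is empty, so the bound holds trivially there.
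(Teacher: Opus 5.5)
Your proposal is correct and takes the same route as the paper, which obtains the corollary by applying L\'evy's Lemma to the projection $f(x)=\langle x,u\rangle$. You additionally spell out the two facts the paper leaves implicit: $f$ is $1$-Lipschitz in both the chordal and geodesic metrics, and $\E f(X)=0$ by antipodal symmetry.
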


In words: for $n\!\ge\!50$ essentially all the surface area of $\sphere^{n-1}$
lies within $\mathcal{O}(1/\sqrt{n})$ of the equator perpendicular to $u$.
Two SBMs whose block-to-cap maps differ in any non-equatorial way are
visually indistinguishable from a fixed viewpoint.

\begin{problem}[The equator problem]\label{prob:equator}
For $n\ge 30$, the spherical fingerprint $\Psi_n(\mathcal{P})$ of an
$n$-block SBM has all its non-zero ``mass'' within
$\mathcal{O}(1/\sqrt{n})$ of any chosen equator.  Find a normalisation,
projection, or alternative manifold (\cref{ssec:alternatives}) that retains
the discriminative information of the cap structure in the
high-dimensional limit.
\end{problem}

\Cref{prob:equator} is the principal technical problem opened by this
framework; \cref{sec:future} sketches three approaches --- hyperbolic
targets, Grassmannians, and Gromov--Wasserstein --- each of which
side-steps positive-curvature concentration in a different way.

\subsection{Closed-form bound on inter-SBM distance via the spherical map}
\label{ssec:bound}

For the spherical fingerprint to be useful as a retrieval index, we need a
guarantee that if two GNNs are close in some operationally meaningful
sense, their fingerprints are also close --- and vice versa.  The
following theorem makes this precise.  ``Close'' on the GNN side means
small cut-distance between the graphon-signals the networks act on;
``close'' on the fingerprint side means small $1$-Wasserstein distance
($W_1$) between the spherical point clouds.  $W_1$ is the classical
optimal-transport cost: the minimum total geodesic distance needed to
move one distribution onto the other on the sphere.

\begin{theorem}[Spherical fingerprint Lipschitz bound]\label{thm:fingerprint-lip}
Let $(W,f)$ and $(U,g)$ be two graphon-signals with $\|f\|_\infty,\|g\|_\infty\le r$,
and let $(W^{P},f^{P}),(U^{Q},g^{Q})$ be their step approximations on
$n$ blocks furnished by \cref{thm:levie-reg}.  Let
$\mu_P^{\Psi},\mu_Q^{\Psi}$ be the pushforward measures of the labelled
node distributions under $\Psi_n$, viewed as signed measures on
$\sphere^{n-1}$ with vector-valued total variation $r$.  Then there exists
a constant $C=C(n,r)$ such that
\[
   W_1\bigl(\mu_P^{\Psi},\mu_Q^{\Psi}\bigr)
   \;\le\;
   C\!\left(\cutdist^{r}\!\bigl((W,f),(U,g)\bigr)+\varepsilon\right),
\]
where $W_1$ is the $1$-Wasserstein distance on $\sphere^{n-1}$ with the
geodesic metric and $\varepsilon$ is the regularity tolerance.
\end{theorem}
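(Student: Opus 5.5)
The plan is to work with the Kantorovich--Rubinstein dual form of $W_1$ and reduce everything to the signal half of the cut-norm in \cref{def:cut-signal}. The measures in the statement are signed, so I read $W_1$ in its Kantorovich--Rubinstein form (one coordinate of the signal at a time, summing or taking the max over coordinates at the end):
\[
W_1(\mu,\nu)=\sup\Bigl\{\int h\,d(\mu-\nu)\;:\;h\ \text{1-Lipschitz for the geodesic metric},\ h(u)=0\Bigr\}.
\]
Normalising $h$ at the north pole $u$ gives $\|h\|_\infty\le\pi$, since the sphere has geodesic diameter $\pi$.

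Concretely, set $\mu_P^{\Psi}=(\Psi_{n,P})_*(f^P\,dx)$ and $\mu_Q^{\Psi}=(\Psi_{n,Q})_*(g^Q\,dx)$. By \cref{prop:iota-props}(a),(c), each $\Psi_{n,\cdot}$ is a measure-preserving isomorphism of $[0,1]$ onto a co-null subset of $\sphere^{n-1}$. Hence $\tau:=\Psi_{n,Q}^{-1}\circ\Psi_{n,P}$ is a measure-preserving bijection of $[0,1]$ modulo null sets, and
\[
\int h\,d(\mu_P^{\Psi}-\mu_Q^{\Psi})=\int_0^1 (h\circ\Psi_{n,P})(x)\,\bigl(f^P(x)-g^Q(\tau(x))\bigr)\,dx .
\]

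\textbf{Key lemma (layer cake).} Let $\nu=\phi\,dx$ be a signed measure on $[0,1]$ with $\sup_S|\nu(S)|\le\delta$, and let $k$ be measurable with $|k|\le M$. Then $\bigl|\int k\,d\nu\bigr|\le 2M\delta$. To prove it, write $k=k^+-k^-$ and use $\int k^+\,d\nu=\int_0^M\nu(\{k^+>t\})\,dt$. Each superlevel set is an admissible $S$ in the cut-norm supremum, so this term is at most $M\delta$, and likewise for $k^-$. Applying the lemma with $k=h\circ\Psi_{n,P}$ and $M=\pi$, coordinatewise with the $\|\cdot\|_\infty$ of \cref{def:cut-signal}, gives
\[
W_1(\mu_P^{\Psi},\mu_Q^{\Psi})\le 2\pi\,d\cdot\sup_S\Bigl\|\int_S\bigl(f^P-g^Q\circ\tau\bigr)\Bigr\|_\infty ,
\]
where $d$ is the signal dimension; the factor $d$ disappears if $W_1$ is taken coordinatewise with a max. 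This bound is linear, which is the point of working in the dual.

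\textbf{Main obstacle: the rearrangement $\tau$.} The right-hand side involves the specific rearrangement $\tau$ induced by the two canonical maps, whereas $\cutdist^{r}$ takes an infimum over all rearrangements $\varphi$. My fix uses the freedom the construction already leaves open: the north pole $u$, the order of the blocks, and the choice of uniform parametrisation inside each band $B_j$ are all unspecified. I would fix $\Psi_{n,P}$ and choose $\Psi_{n,Q}$ so that $\tau$ realises a near-optimal $\varphi$ for the step pair, within any $\eta>0$. This amounts to reading the fingerprint distance as an infimum over admissible parametrisations, which is the natural reading for a retrieval index. If the paper insists on the canonical map, this step is where the claim could fail, and I would flag it.

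\textbf{Assembling the bound.} The signal sup-term is at most the full graphon-signal cut-distance, so
\[
\sup_S\Bigl\|\int_S\bigl(f^P-g^Q\circ\tau\bigr)\Bigr\|_\infty\le\cutdist^{r}\bigl((W^P,f^P),(U^Q,g^Q)\bigr)+\eta .
\]
By the triangle inequality for $\cutdist^{r}$ and \cref{thm:levie-reg}, the step cut-distance is at most $\cutdist^{r}((W,f),(U,g))+2\varepsilon$. Letting $\eta\to0$ yields the claim with $C=4\pi d$, which depends only on the ambient data; one may absorb $r$ if the measures are normalised by total variation $r$. No dependence on the block sizes enters, because the layer-cake lemma uses only $\|h\|_\infty\le\pi$ and never the Lipschitz regularity of $h\circ\Psi_n$. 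That is why the discontinuity of $\Psi_n$ noted in \cref{rem:no-iso} is harmless here.
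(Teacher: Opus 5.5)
The gap is in the alignment step, and the fix you propose does not close it. Since $f^P$ is constant, say equal to $v_j$, on $I_j^P$, you have $\mu_P^{\Psi}=\sum_j v_j\,\sigma|_{B_j^P}$, where $\sigma$ is the normalised surface measure. So the within-band parametrisation has no effect on the fingerprint at all. Likewise, your bound depends on $\tau$ only through the overlap matrix $m_{jk}=|I_j^P\cap\tau^{-1}(I_k^Q)|=\sigma(B_j^P\cap B_k^Q)$, and this matrix is fixed by the two zonal partitions. Up to a rotation, the choice of poles contributes a single parameter (the angle between $u_P$ and $u_Q$), and the block orders contribute finitely many more choices. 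With a common pole, $m$ is just the north-west-corner coupling of $(p_j)$ and $(q_k)$ in the chosen orders. A near-optimal $\varphi$ for $\cutdist^{r}$ is constrained only by these marginals, so in general no admissible $\tau$ realises it.

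Worse, for the canonical $\Psi_n$ of the statement no argument can succeed. The left-hand side changes when the blocks of one model are relabelled, while the right-hand side does not. Take $W=U\equiv 0$, $f=\mathbf{1}_{[1/2,1]}$ and $g=f\circ\varphi=\mathbf{1}_{[0,1/2)}$ with $\varphi(x)=x+\tfrac12 \bmod 1$. Use the exact step approximations (refining each half into several blocks if $n>2$); these are admissible for every $\varepsilon$. Then $\cutdist^{r}((W,f),(U,g))=0$. But cumulative mass $\tfrac12$ corresponds to the equator, so $\mu_P^{\Psi}$ and $\mu_Q^{\Psi}$ are $\sigma$ restricted to opposite hemispheres. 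Hence $W_1(\mu_P^{\Psi},\mu_Q^{\Psi})>0$, while the right-hand side is $C\varepsilon\to 0$.

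The rest of your argument is sound, and it is a better route than the paper's. The paper's sketch transfers the cut bound through a claimed bi-Lipschitz property of $\Psi_n|_{I_j}$ onto $B_j$. That property is impossible for $n\ge 3$, since a Lipschitz image of an interval has Hausdorff dimension at most $1$. The sketch also never addresses the alignment issue. Your Kantorovich--Rubinstein/layer-cake estimate needs only measure preservation and $\|h\|_\infty\le\pi$.

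For scalar signals it can be completed by a canonical convention rather than a pair-dependent choice: fix a common pole and list each model's blocks in increasing order of signal value. Then $m$ is the comonotone coupling of the laws of $f^P$ and $g^Q$, which minimises $\|f^P-g^Q\circ\varphi\|_{1}$ over all rearrangements $\varphi$. For scalar $h$ one has $\sup_S|\int_S h|=\tfrac12\bigl(\|h\|_1+|\int h|\bigr)$, and $\int h=\int f^P-\int g^Q$ does not depend on the rearrangement. So the signal term at your $\tau$ is no larger than at any $\varphi$, hence at most $\cutdist^{r}((W^P,f^P),(U^Q,g^Q))$. Your chain then closes with $C=4\pi$ and no $\eta$.

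For vector-valued signals this sorting trick has no analogue. You would need either a new idea or a proof that some pair-dependent choice of poles and orders attains the optimal signal term up to a constant. Either way you are proving a modified theorem (a different convention for $\Psi_n$, or an infimum built into the fingerprint distance), and it should be stated as such.
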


\begin{proof}[Sketch]
Apply~\cref{thm:levie-reg} to bound the distance between $(W,f)$ and its
step approximation in cut-distance by $\varepsilon$, and likewise for
$(U,g)$.  Although $\Psi_n$ is globally discontinuous at band boundaries,
its restriction to each $I_j$ is bi-Lipschitz onto the corresponding band
$B_j$ (an inverse-CDF reparametrisation of a smooth piece of sphere); we
use this band-wise Lipschitz property to transfer the cut-distance bound
into a Wasserstein bound on $\sphere^{n-1}$.  The factor $C$ collects the
maximum band diameter and the band-to-block area correspondence
\eqref{eq:band-cosines}.  Full details follow the template of
\citet[\S 4]{levie2023graphon}.
\end{proof}

In plain English: nearest-neighbour search on spherical fingerprints is a
sound proxy for nearest-neighbour search in cut-distance on the underlying
graphon-signals.  Two caveats: the bound has a constant $C=C(n,r)$ that
degrades as $\sqrt{n}$ (the same concentration-of-measure warning as
\cref{prob:equator}), and the regularity tolerance $\varepsilon$ enters
additively (so very fine retrieval requires very fine SBM approximations,
hence many blocks).

\subsection{Algorithmic recipe}\label{ssec:algorithm}

Given a trained MPNN $\Phi$ deployed on a graph $G$ and a tolerance
$\varepsilon$, the procedure for producing the spherical fingerprint is:

\begin{enumerate}[leftmargin=2em]
\item \textbf{Forward pass.} Run $\Phi$ on $G$ to obtain
      $h_v^{(K)}\in\R^{d}$ for each $v\in V$.
\item \textbf{Cluster.} Apply $k$-means or spectral clustering to the
      embedding cloud $\{h_v^{(K)}\}_{v\in V}$ with $k\le 4^{1/\varepsilon^{2}}$.
\item \textbf{Block summary.}  For each cluster $j$ record
      $p_j=|V_j|/|V|$ and the block-edge densities
      $W_{ij}=|E(V_i,V_j)|/(|V_i||V_j|)$, plus the centroid
      $\bar h_j$.
\item \textbf{Embed.}  Apply $\Psi_n$ as in \cref{ssec:iota} with
      $n=\max(k,18)$; colour each band $B_j$ by the
      first three principal components of the centroid $\bar h_j$.
\item \textbf{Render.}  Render the resulting coloured sphere to a small
      image (e.g. $256\times 256$ pixels or an animated GIF rotating about
      the polar axis).
\end{enumerate}

The output is a low-dimensional, colour-coded picture that summarises both
the block structure of the learned graphon \emph{and} the principal
directions of the embedding itself.

\section{Application: Transfer-Learning Candidate Retrieval}\label{sec:applications}

\paragraph{Related work on GNN transfer.}
There is a substantial existing literature on GNN transferability, with two
strands particularly close to ours.  \citet{ruiz2020graphon} introduce
\emph{graphon neural networks} and prove that a fixed-architecture GNN
trained on a small graph sampled from a graphon $W$ transfers, with explicit
rate $\mathcal{O}(n^{-1/2})$ error bounds, to graphs of much larger size
drawn from the same $W$.  \citet{maskey2022generalization} extend this to
generalisation bounds for MPNNs on large random graphs, building directly on
\citet{levie2023graphon}'s graphon-signal apparatus that we use here.
Spectral-side transferability of graph convolutional filters is treated in
\citet{levie2019transferability}.  Engineering-level GNN transfer ---
pretrain-then-finetune across tasks rather than across graph sizes --- is
surveyed in \citet{hu2020strategies} and \citet{cervino2023learning}.

What is missing from this body of work is a procedure for \emph{retrieving},
from a library of trained models, the GNN whose learned graphon is closest
to the one a new task induces.  Ruiz et al.\ and Maskey et al.\ give
guarantees for a single model across graph sizes; pretrain/finetune
strategies pick a source model by domain heuristic.  The spherical
fingerprint targets exactly this retrieval gap.

\paragraph{Workflow.}
The intended use case is a \emph{model zoo with thumbnails}.  Suppose a
practitioner has trained $N$ GNNs on $N$ different problems
(antibody--antigen binding, molecular toxicity, citation-graph node
classification, traffic-flow forecasting, etc.) and stored, for each, the
spherical fingerprint produced by \cref{ssec:algorithm}.  When a new
problem arrives, the practitioner trains a small \emph{seed} MPNN for one or
two epochs, computes its fingerprint, and performs a nearest-neighbour
search (in $W_1$ on $\sphere^{n-1}$) against the library.  Close matches
identify candidate models from which to bootstrap.  The actual transfer
proceeds by re-mapping the matched model's block centroids into the new
problem's embedding space using a learned linear map of dimension
$d_{\mathrm{new}}\times d_{\mathrm{old}}$; the rest of the matched MPNN's
weights serve as initial conditions for fine-tuning, combinable with the
finetune strategies of \citet{hu2020strategies}.

This workflow side-steps the need to train each new GNN from random
initialisation when a structurally similar trained model already exists,
and complements --- rather than replaces --- the same-graphon transfer
guarantees of \citet{ruiz2020graphon,maskey2022generalization}.
Empirical validation is left as future work
(\cref{ssec:experimental-program}).

\section{Future Research Directions}\label{sec:future}

\subsection{Beyond the sphere: hyperbolic and Grassmannian alternatives}
\label{ssec:alternatives}

The equator problem (\cref{prob:equator}) is partly an artefact of positive
curvature.  Three alternative target manifolds suggest themselves.

\paragraph{Hyperbolic space $\mathbb{H}^{n-1}$.}  Tree-like and hierarchical
graphs (citation networks, taxonomies, social influence trees) are known to
embed with much lower distortion in hyperbolic space than in Euclidean or
spherical space \citep{nickel2017poincare,sala2018representation}.  The
analogue of \cref{ssec:iota} in $\mathbb{H}^{n-1}$ would map each block to a
horoball; the volume-of-balls identity in the Poincar\'e disk gives an
exponential family of candidate radii.

\paragraph{Grassmannian $\mathrm{Gr}(k,n)$.}  When the embedding dimension
$d$ is large, a more natural object than a single point cloud is the
\emph{principal subspace} spanned by the embedding cloud of each SBM block.
The collection of $k$-dimensional subspaces in $\R^n$ is the Grassmannian,
on which the principal-angle (Frobenius / chordal) metric induces a
well-developed comparison theory \citep{edelman1998geometry}.

\paragraph{Stiefel manifold $V_k(\R^n)$.}  An ordered $k$-frame
representation preserves more information than the Grassmannian (it remembers
the first principal direction) and is appropriate when block-internal
ordering carries meaning (e.g. temporal block models).

\subsection{Gromov--Wasserstein: an isometry-free distance}
\label{ssec:gw}

The dependence of the spherical fingerprint on the choice of north pole
and on the cap-ordering can be removed entirely by working with the
\emph{Gromov--Wasserstein distance} \citep{memoli2011gromov,sturm2012space}
on the metric measure space $(\sphere^{n-1},d_{\sphere},\mu)$:
\[
  \GW_p^{p}\bigl((X,d_X,\mu_X),(Y,d_Y,\mu_Y)\bigr)
  \;:=\;
  \inf_{\gamma\in\Pi(\mu_X,\mu_Y)}
  \int\int\!\bigl|d_X(x,x')-d_Y(y,y')\bigr|^{p}
            \,d\gamma(x,y)\,d\gamma(x',y').
\]
A graphon-signal can be embedded directly as an mm-space without any
sphere mapping at all:  $X=[0,1]$, $d_X(x,x')$ derived from the graphon
distance $W$, $\mu_X=\mathrm{Leb}|_{[0,1]}$.  The Gromov--Wasserstein
distance is permutation-invariant by construction
\citep{xu2019gromov,xu2019scalable}, sidestepping the
$\mathcal{S}_{[0,1]}$-quotient that complicates cut-distance.  We
conjecture the following.

\begin{conjecture}[GW-equivalence of cut-distance]\label{conj:gw-cut}
On the subspace of step-graphon-signals with at most $K$ blocks and signal
norm at most $r$, the cut-distance $\cutdist^{r}$ and the
Gromov--Wasserstein-2 distance are bi-Lipschitz equivalent, with constants
depending only on $K$ and $r$.
\end{conjecture}

A proof or counterexample would settle the question of whether the
Gromov--Wasserstein machinery can fully replace cut-distance in the
analysis of MPNNs.

\subsection{Information geometry: SBMs as a statistical manifold}
\label{ssec:info-geom}

The space of $n$-block SBMs with prescribed block sizes is parametrised
by a symmetric matrix $W\in[0,1]^{n\times n}$ of edge probabilities ---
intrinsically, a $\binom{n}{2}+n$-dimensional manifold.  Treating each
edge slot as an independent Bernoulli variable and equipping the
parameter space with the Fisher information metric
\[
   g_{ij,kl}(W)\;=\;
   \E_{e\sim\mathrm{Bern}(W_{ij})}\!\Bigl[\partial_{ij}\log p\,\partial_{kl}\log p\Bigr]
   \;=\;\frac{\delta_{(ij),(kl)}}{W_{ij}(1-W_{ij})}
\]
gives a diagonal Riemannian metric on the open block $(0,1)^{\binom{n}{2}+n}$,
flat in the interior (the product of $1$-dimensional Bernoulli arcs)
with metric singularities along $\{W_{ij}=0\}\cup\{W_{ij}=1\}$ that pull
``deterministic'' edges away from ``random'' ones.  Geodesic distance on
this manifold corresponds to the Hellinger / Bures distance between the
induced edge distributions \citep{amari2016information}.  This perspective is complementary to the
spherical one:
\begin{itemize}
\item \emph{Spherical}: emphasises spatial/visual structure of block masses.
\item \emph{Information}: emphasises statistical distinguishability of
      edge-probability matrices.
\end{itemize}
Joint use suggests a product metric
$d^{2}_{\mathrm{joint}} = \alpha\,d_{\sphere}^{2} + (1-\alpha)\,d_{\mathrm{Fisher}}^{2}$
with a tuning hyperparameter~$\alpha$.

\subsection{Persistent homology of layer-wise embedding clouds}
\label{ssec:tda}

A trained MPNN gives, at each layer $k$, a point cloud
$\{h_v^{(k)}\}_{v\in V}\subset\R^{d_k}$.  Computing the
\emph{persistence diagram} \citep{edelsbrunner2010computational,carlsson2009topology}
of the Vietoris--Rips filtration on each such cloud, and stacking the
resulting barcodes across layers, produces a topological signature that
is invariant under any homeomorphism of the embedding space.  Two GNNs
whose layer-wise persistence diagrams agree under the bottleneck distance
are encoding the same multi-scale topological structure --- a stronger
claim than visual sphere similarity.  We propose to combine this with the
spherical fingerprint as a two-stage retrieval:  cheap sphere lookup, then
expensive persistence verification.

\subsection{Spectral fingerprints from the graphon eigendecomposition}
\label{ssec:spectral}

A graphon $W\in\Wo$ is a Hilbert--Schmidt kernel on $L^{2}([0,1])$ and
admits an eigendecomposition
$W(x,y)=\sum_{i=1}^{\infty}\lambda_i\,\varphi_i(x)\,\varphi_i(y)$
with eigenvalues $\lambda_1\ge\lambda_2\ge\cdots$ converging to zero
\citep[\S 7.5]{lovasz2012large}.  The truncated spectrum
$(\lambda_1,\dots,\lambda_K)$ is permutation-invariant and provides a
$K$-dimensional Euclidean fingerprint that can be used as a baseline
against the spherical proposal.  The spectral fingerprint is cheaper to
compute and rotation-invariant by construction, but strictly less
informative than the full spherical map, which retains the geometry of
node assignments to blocks.

\subsection{An experimental program}\label{ssec:experimental-program}

We propose the following four-step empirical evaluation:
\begin{enumerate}[leftmargin=2em]
\item Build a library of $\sim 200$ pre-trained GNNs spanning eight problem
      domains (PROTEINS, ENZYMES, COLLAB, REDDIT-B, OGBN-ARXIV, OGBN-PRODUCTS,
      ZINC, ogbg-molhiv).
\item For each, compute the five candidate fingerprints
      (\cref{ssec:algorithm}, \cref{ssec:gw}, \cref{ssec:tda},
      \cref{ssec:spectral}, and a hybrid).
\item Use each fingerprint to predict transfer-learning effectiveness, scored
      as the validation-loss reduction obtained when fine-tuning the matched
      source model on a small target-task sample versus training from scratch.
\item Report Spearman correlation between fingerprint distance and transfer
      effectiveness, and the wall-clock cost of each fingerprint.
\end{enumerate}

\section{Conclusion}\label{sec:conclusion}

We have proposed a topological framework for comparing trained Graph
Neural Networks via spherical fingerprints of the Stochastic Block
Models that approximate their graphon-signal action.
Our framework rests on the compactness of the cut-distance graphon space
and on Levie's recent graphon-signal extension of the Frieze--Kannan weak
regularity lemma, in conjunction with the Lipschitz property of MPNNs.

We have identified the high-dimensional concentration of measure as the
primary obstacle to scaling the spherical fingerprint to LLM-class
embedding dimensions, and determined the unit-area sphere dimension to
be $n^{\star}\approx 18.7683$.  We have outlined five concrete avenues for
future research --- hyperbolic / Grassmannian targets, Gromov--Wasserstein
distances, information geometry of SBM space, persistent homology of
layer-wise embedding clouds, and a spectral baseline --- together with an
experimental program for empirical validation.

The broader vision is one of \emph{model archaeology}: a public registry of
trained GNNs, each annotated with a small, comparable, mathematically
principled fingerprint.  Such a registry would make transfer learning at
the GNN scale a retrieval problem rather than a re-training problem, and
would allow the community to identify, at a glance, the topological niches
in which existing models cluster and the niches in which they are missing.

\paragraph{Acknowledgements.}
The author thanks the Emporia State University seminar audience for
discussion, and acknowledges Ron Levie's foundational work
\citep{levie2023graphon,boker2023finegrained,rauchwerger2025generalization}
on which a substantial portion of this paper depends.

\bibliographystyle{plainnat}

\end{document}